\numberwithin{equation}{section}
\theoremstyle{plain}
\let\hat\widehat
\let\tilde\widetilde
\newcommand{\bdb}{{B\Delta B^*}}
\title{\huge Recovering Block-structured Activations Using Compressive
  Measurements}
\date{}
\author{
Sivaraman Balakrishnan\thanks{Language Technology Institute, Carnegie
  Mellon University
; e-mail: {\tt
    sbalakri@cs.cmu.edu}.}, ~~Mladen Kolar\thanks{Machine Learning Department, Carnegie Mellon
  University
; e-mail: {\tt
  mladenk@cs.cmu.edu}.}, ~~Alessandro Rinaldo\thanks{Department of Statistics, Carnegie Mellon
  University
; e-mail: {\tt
  arinaldo@stat.cmu.edu}.}, ~ and ~ Aarti Singh\thanks{Machine Learning Department, Carnegie Mellon
  University
; e-mail: {\tt
  aarti@cs.cmu.edu}.}}
\begin{document}

\maketitle

\begin{abstract}
  We consider the problems of detection and localization of a
  contiguous block of weak activation in a large matrix, from a small
  number of noisy, possibly adaptive, compressive (linear)
  measurements.  This is closely related to the problem of compressed
  sensing, where the task is to estimate a sparse vector using a small
  number of linear measurements.  Contrary to results in compressed
  sensing, where it has been shown that neither adaptivity nor
  contiguous structure help much, we show that for reliable
  \emph{localization} the magnitude of the weakest signals is strongly
  influenced by both structure and the ability to choose measurements
  adaptively while for \emph{detection} neither adaptivity nor
  structure reduce the requirement on the magnitude of the signal.  We
  characterize the precise tradeoffs between the various problem
  parameters, the signal strength and the number of measurements
  required to reliably detect and localize the block of
  activation. The sufficient conditions are complemented with
  information theoretic lower bounds.
\end{abstract}

\textbf{Keywords}:
{adaptive procedures},
{compressive measurements},
{large average submatrix},
{signal detection},
{signal localization},
{structured Normal means}

\section{Introduction}

Compressive measurements provide a very efficient means of recovering
signals that are sparse in some basis or frame. Specifically, several
papers, including
\citet[][]{candies.tao.06,donoho:06,candes07dantzig}, and
\citet{candes.wakin} have shown that it is possible to recover, in an
$\ell_2$ sense, a $k$-sparse vector in $n$ dimensions using only
$\Ocal(k \log n)$ incoherent compressive measurements, instead of
measuring all of the $n$ coordinates. This is a novel and important
paradigm with applications in a wide range of scientific areas. Along
with $\ell_2$ recovery, researchers have also considered the problems
of \emph{detection} and \emph{localization} of a sparse signal
corrupted by additive noise, the former task logically preceding the
latter.  The problem of detection is to test whether all components of
the vector are zero. \citet[][]{duarte06,haupt_det07} and
\citet[]{ariascastro12} studied detection of sparse vectors from
compressive measurements, while \citet{arias2011global} identifies
conditions for successful detection in sparse linear regression (see
also \citet{ingster2010detection}).  The problem of localization is to
identify coordinates of the non-zero elements of a
signal. \citet{wainwright06sharp} and \citet[]{wainwrightlimits}
studied information theoretic limits and localization properties of
the LASSO procedure.  More recently, researchers have contributed two
important refinements: 1) by considering a sparse \emph{structured}
signal (such as a signal consisting of adjacent coordinates or a
block) \citep{modelcs,castrofundamental,haupttree} and 2) by allowing
for the possibility of taking adaptive measurements, i.e., where
subsequent measurements are designed based on past observations
\citep[see, e.g.,
][]{candeshowwell,castrofundamental,hauptcds,davenportcbs,nowakcbs}.
However, almost all of this work has been focused on recovery or
detection of (structured or unstructured) sparse data {\em vectors}
from (passive or adaptive) compressed measurements.

In this work we focus on the unexplored problems of detection and
localization for data matrices from compressive measurements. We are
concerned with signals that are both sparse and highly structured,
taking the form of a sub-matrix of a larger matrix with contiguous row
and column indices. Data matrices have been considered in the context
of low-rank matrix completion \citep[see,
e.g.,][]{negahbanmc,tsybakovmc}, where recovery in Frobenius norm is
studied. The problems of detection and localization for data matrices
that are observed directly were studied previously. See, for example,
\citet{sun10maximal, kolarBRS11, butucea11detection, butucea13sharp,
  Bhamidi12energy}.  However, compressive measurement schemes were not
investigated.  If the activation is unstructured, the treatment of
data matrices is exactly equivalent to the treatment of data vectors.
However, in the structured case the problem is rather different, as we
will show.  Data matrices with signals that are both sparse and highly
structured form a natural model for several real-world activations
such as when we have a group of genes (belonging to a common pathway
for instance) co-expressed under the influence of a set of similar
drugs \citep{genebiclus}, or when we have groups of patients
exhibiting similar symptoms \citep{asthma}, or when we have sets of
malware with similar signatures \citep{bitshred}, etc. However, in
many of these applications, it is difficult to measure, compute or
store all the entries of the data matrix. For example, measuring
expression levels of all genes under all possible drugs is expensive,
or recording the signatures of each individual malware is
computationally demanding as it might require stepping through the
entire malware code.  However, if we have access to linear
combinations of matrix entries (i.e. compressive measurements) such as
combined expression of multiple genes under the influence of multiple
drugs then we might need to only make and store few such measurements,
while still being able to infer the existence or location of the
activated block of the data matrix.  Thus, the goal is to detect or
recover the activated block (set of co-expressed genes and drugs or
malware with similar signatures) using only few compressive
measurements of the data matrix, instead of observing the entire data
matrix directly.  We consider both the passive (non-adaptive) and
active (adaptive) measurements. The non-adaptive measurements are
random or pre-specified linear combinations of matrix entries. In
other cases, such as mixing drugs, we might be able to adapt the
measurement process by using feedback to sequentially design linear
combinations that are more informative.

Extensions to a setup where there is a non-contiguous sub-matrix or
block of activation are also interesting, but beyond the scope of this
paper.  \citet[][]{sun10maximal, butucea11detection, butucea13sharp,
  Bhamidi12energy,kolarBRS11} study a problem where a large noisy
matrix is observed directly, i.e., not through compressed
measurements, and the block of activation is non-contiguous. In such a
setting, tight upper and lower bounds are derived for the localization
problem. However, passive and adaptive \emph{compressive} measurement
schemes were not investigated.

{\bf Summary of our contributions.}  Using information theoretic
tools, we establish \emph{lower bounds} on the minimum number of
compressive measurements and the weakest signal-to-noise ratio (SNR)
needed to detect the presence of an activated block of positive
activation, as well as to localize the activated block, using both
non-adaptive and adaptive measurements.  We also demonstrate minimax
optimal \emph{upper bounds} through detectors and estimators that can
guarantee consistent detection and localization of weak
block-structured activations using few non-adaptive and adaptive
compressive measurements.

Our results indicate that adaptivity and structure play a key role and
provide significant improvements over non-adaptive and unstructured
cases for localization of the activated block in the data matrix
setting. This is unlike the vector case where contiguous structure and
adaptivity have been shown to provide minor, if any, improvement.  We
describe the results for the sparse vector case in related work
section below.  A summary of the SNR needed for detection and
localization of an unstructured sparse vector using passive and
adaptive compressive measurements is given in
Table~\ref{tab:summary_vec}.

In our setting we take compressive measurements of a data
\emph{matrix} of size $n = (n_1 \times n_2)$, the activated block is
of size $k = (k_1 \times k_2)$, with minimum SNR per entry of
$\mu/\sigma$, and we have a budget of $m$ compressive measurements
with each measurement matrix constrained to have unit Frobenius norm.
Table~\ref{tab:summary} describes our main findings (for the case when
$n_1=n_2$ and $k_1=k_2$ and paraphrasing for clarity) and compare the
scalings under which passive and active, detection and localization
are possible.

\begin{table}[t]
\centering
\caption{Summary of known results for the sparse vector case, 
where the length of the vector is $n$ and the number of active elements 
is $k$. The number of measurements is $m$ and $\mu/\sigma$ represents
SNR per
element
of the activated elements.}
\begin{tabular}{c|c|c ll}
      &   Detection    & \multicolumn{2}{c}{Localization} \\ \hline
	\multirow{3}{*}{Passive}  &  
	\multirow{3}{*}{$\frac{\mu}{\sigma} \asymp  \sqrt{\frac{n}{mk^2}}$} &  
	\multirow{2}{*}{$ \frac{\mu}{\sigma} \asymp  \sqrt{\frac{n\log n}{m}},$} & 
	\multirow{2}{*}{\citet{wainwrightlimits}} \\ 
&   & \\
	& &	$m \succ k \log n $\\
\cline{1-1}\cline{3-4}
	\multirow{3}{*}{Active}   & 
	\multirow{3}{*}{\citet{ariascastro12}} &  
	\multirow{3}{*}{$\frac{\mu}{\sigma} \asymp \sqrt{\frac{n}{m}}$} & 
	\citet{castrofundamental}\\
	& & & \citet{davenportcbs}\\
	& & & \citet{nowakcbs} \\
\hline
\end{tabular}
\label{tab:summary_vec}
\end{table}

\begin{table}[t]

\centering
\caption{Summary of main findings for the case when
$n = n_1 \times n_2 \ (n_1=n_2)$ and $k = k_1 \times k_2 \ (k_1=k_2)$, 
where the size of the matrix is $n_1
  \times n_2$ and the size of the activation block is $k_1 \times k_2$. The
  number of measurements is $m$ and $\mu/\sigma$ represents SNR per
  element of the activated block.}
\begin{tabular}{c|c|cl}
      &   Detection    & Localization \\ \hline
\multirow{2}{*}{Passive}  &  \multirow{2}{*}{$\frac{\mu}{\sigma} \asymp  \sqrt{\frac{n_1n_2}{mk_1^2k_2^2}} $} &  \multirow{2}{*}{$ \frac{\mu}{\sigma} \asymp  \sqrt{\frac{n_1n_2}{m\min(k_1,k_2)}}  $}
    & \multirow{2}{*}{Theorems \ref{thm:passivelb} and \ref{thm:passiveub}} \\ 
&   & & \\
\cline{1-1}\cline{3-4}
\multirow{2}{*}{Active}   & \multirow{2}{*}{Theorems \ref{thm:detectionlb} and \ref{thm:detectionub}}
 &  \multirow{2}{*}{$\frac{\mu}{\sigma} \asymp \frac{1}{\sqrt{m}}\max \big( \sqrt{\frac{n_1n_2}{k_1^2k_2^2}}, \frac{1}{\sqrt{\min(k_1,k_2)}}\big) $ }
 &  \multirow{2}{*}{Theorems \ref{thm:activelb} and \ref{thm:activeub}}\\
 &  &  &  \\
\hline
\end{tabular}
\label{tab:summary}
\end{table}

For detection, akin to the vector setting, structure and adaptivity
play no role.  The structured data matrix setting requires an SNR
scaling of $\sqrt{n_1 n_2/(m k_1^2k_2^2)}$ for both non-adaptive and
adaptive cases, which is same as the SNR needed to detect a
$k_1k_2$-sparse non-negative vector of length $n_1n_2$ as demonstrated
in \citet{ariascastro12}. Thus, the structure of the activation
pattern as well as the power of adaptivity offer no advantage in the
detection problem.

For localization of the activated block, the structured data matrix
setting requires an SNR scaling as $\sqrt{n_1 n_2/(m \min(k_1,k_2))}$
using non-adaptive compressive measurements.  In contrast, the
unstructured setting requires a higher SNR of $\sqrt{n_1
  n_2\log(n_1n_2)/m}$ where $m \geq k_1k_2 \log(n_1n_2)$ as
demonstrated in \citet{wainwrightlimits}. Structure, without
adaptivity already yields a factor of $\sqrt{\min\rbr{k_1,k_2}}$
reduction in the smallest SNR that still allows for reliable
localization.  Moreover, adaptivity in the compressive measurement
design yields further improvements: with adaptive measurements,
identifying the activated block requires a much weaker SNR of
\[ 
\max(\sqrt{n_1n_2/(mk_1^2k_2^2)},\sqrt{1/(m\min(k_1,k_2))})
\]
for the weakest entry in the data matrix. In contrast, for the sparse
vector case, \citet{castrofundamental} showed that adaptive
compressive measurements cannot localize the non-zero components if
the SNR is smaller than $\sqrt{n_1n_2/m}$. A matching upper bound was
provided using compressive binary search in \citet{davenportcbs} and
\citet{nowakcbs} for localization of a single non-zero entry in the
vector. Thus, exploiting structure of the activations and designing
adaptive linear measurements can both yield significant gains if the
activation corresponds to a contiguous block in a data matrix.

{\bf Related Work.}  Our work builds on a number of fairly recent
contributions on detection, localization and recovery of a sparse and
weak unstructured signal by adaptive compressive measurements.  In
\citet{castrofundamental}, the authors show that the adaptive
compressive scheme offers improvements over the passive scheme which,
in terms of the mean-squared error (MSE) and localization, are limited
to a $\log(n)$ factor. The authors also provide a general proof
strategy for minimax analysis under adaptive
measurements. \citet{ariascastro12} further applies this strategy to
the problem of detection of an unstructured and structured sparse and
weak vector signal under compressive adaptive measurements.
\citet{nowakcbs} shows that a compressive version of standard binary
search achieves minimax performance for localization in a one-sparse
vector.  The work of \citet{wainwrightlimits} which is based on
analyzing the performance of an exhaustive search procedure under
passive measurements, is relevant to our analysis of passive
localization. Our analysis provides a generalization of these results
to the case of a {\it structured} signal embedded as a small
contiguous block in a large matrix.

While in this paper we focus on detection and localization, some other
papers have considered estimation of sparse vectors in the MSE sense
using adaptive compressive measurements. For example,
\citet{castrofundamental} establishes fundamental lower bounds on the
MSE in a linear regression framework, while \citet{hauptcds}
demonstrates upper bounds using compressive distilled sensing.
\citet{modelcs} and \citet{haupttree} have analyzed different forms of
structured sparsity in the vector setting, e.g. if the non-zero
locations in a data vector form non-overlapping or
partially-overlapping groups or are tree-structured.  Finally,
\citet{negahbanmc} and \citet{tsybakovmc} have considered a
measurement model identical to ours in the setting of low-rank matrix
completion, but in that setting the matrix under consideration is not
assumed to be a structured sparse matrix and the theoretical
guarantees are with respect to the Frobenius norm. Furthermore,
\cite{kolarBRS11} illustrate that penalization using the sum of
nuclear and $\ell_1$ norm cannot be used for localization in a related
model.

When data matrix is observed directly, \citet{butucea11detection}
study the problem of detection, while \citet{kolarBRS11} and
\citet{butucea13sharp} study the problem of
localization. \citet{sun10maximal} and \citet{Bhamidi12energy}
characterize largest average submatrices of the data matrix under the
null hypothesis that the signal is not present. Results in those
papers do not carry over to a setting where a data matrix is accessed
through compressive measurements, as already seen in the vector case
\citep{ariascastro12}.

The rest of this paper is organized as follows. We describe the
problem set up and notation in Section~\ref{sec:notat}. We study the
detection problem in Section \ref{sec:detection}, for both adaptive
and non-adaptive schemes. Section \ref{sec:passive_loc} is devoted to
the non-adaptive localization, while Section \ref{sec:active_loc} is
focused on adaptive localization. Finally, in Section
\ref{sec:experiments} we present and discuss some simulations that
support our findings. The proofs are given in the Appendix.

{\bf Notatation.} In this paper we denote $[n]$ to be the set
$\{1,\ldots,n\}$.  For a vector $\ab \in \RR^n$, we denote ${\rm
  supp}(\ab) = \{j\ :\ a_j \neq 0\}$ the support set, $\norm{\ab}_q$,
$q \in [1,\infty)$, the $\ell_q$-norm defined as $\norm{\ab}_q =
(\sum_{i\in[n]} |a_i|^q)^{1/q}$ with the usual extensions for $q \in
\{0,\infty\}$, that is, $\norm{\ab}_0 = |{\rm supp}(\ab)|$ and
$\norm{\ab}_\infty = \max_{i\in[n]}|a_i|$. For a matrix $\Ab \in
\RR^{n_1\times n_2}$, we denote $\norm{\Ab}_F$ the Frobenius norm
defined as $\norm{\Ab}_F = (\sum_{i\in[n_1],j\in[n_2]}
a_{ij}^2)^{1/2}$. For two sequences $\{a_n\}$ and $\{b_n\}$, we use
$a_n = \Ocal(b_n)$ to denote that $a_n < Cb_n$ for some finite
positive constant $C$.  We also denote $a_{n} = \Ocal(b_{n})$ to be
$b_{n}\gtrsim a_{n}$. If $a_{n} = \Ocal(b_{n})$ and $b_{n} =
\Ocal(a_{n})$, we denote it to be $a_{n}\asymp b_{n}$. The notation
$a_n = o(b_n)$ is used to denote that $a_nb_n^{-1}\rightarrow 0$.


\section{Preliminaries}
\label{sec:notat}

Let $A \in \RR^{n_1\times n_2}$ be a signal matrix with unknown
entries. We are interested in a highly {\it structured } setting where
a {\it contiguous} block of the matrix $A$ of size $(k_1\times k_2)$
has entries all equal to $\mu > 0$, while all the other elements of
$A$ are equal to zero. We denote the coordinate set of all contiguous
blocks, of size $k_1\times k_2$ with
\begin{equation}
\label{eq:bcal}
\Bcal = \left\{ I_r \times I_c\ :\ 
\begin{array}{l}
I_r \text{ and }I_c \text{ are contiguous subsets of } [n_1] 
\text{ and } [n_2], \\
|I_r|=k_1, |I_c|=k_2  
\end{array}
\right\}.
\end{equation}
Then $A = (a_{ij})$ with $a_{ij} = \mu \ind\{(i,j)\in B^*\}$ for some
(unknown) $B^* \in \Bcal$, where $\ind$ is the indicator
function. Some of our results extend to the case when the activation
is positive, but not constant on $B^*$, as we discuss below.
Note that we assume the size $(k_1 \times k_2)$ is known.

We consider the following observation model under which $m$ noisy
linear measurements of $A$ are available
\begin{equation}
\label{eqn:model}
y_i = \tr(AX_i) + \epsilon_i, \quad i = 1,\ldots,m,
\end{equation}
where $\epsilon_1,\ldots,\epsilon_m \iidsim \Ncal(0,\sigma^2)$, with
$\sigma > 0$ known, and the sensing matrices $(X_i)_{i\in[m]}$ are
normalized to satisfy either $\| X_i \|_F \leq 1$ or $\mathbb{E}\|
X_i\|_F^2 = 1$, i.e., every measurement has the same amount of
energy. These are similar assumptions as made in \cite{davenportcbs}
and \cite{candeshowwell}.

Under the observation model in Eq.~\eqref{eqn:model}, we study two
tasks: (1) detecting whether a contiguous block of positive signal
exists in $A$ and (2) identifying the block $B^*$, that is, the
localization of $B^*$. We develop efficient algorithms for these two
tasks that provably require the smallest number of measurements, as
explained below. The algorithms are designed for one of two
measurement schemes: (1) the measurement scheme can be implemented in
an adaptive or sequential fashion, that is, actively, by letting each
$X_i$ to be a (possibly randomized) function of $(y_j,
X_j)_{j\in[i-1]}$, and (2) the measurement matrices are chosen all at
once or ignoring the outcomes in previous measurements, that is,
passively.

{\bf Detection.} The detection problem concerns checking whether a
positive contiguous block exists in $A$. As we will show later, we can
detect the presence of a contiguous block with a much smaller number
of measurements than is required for localizing its position.
Formally, detection is a hypothesis testing problem with a composite
alternative of the form
\begin{equation}
\label{eq:detection_hypothesis}
\begin{array}{cl}
H_0 \colon & A = 0_{n_1 \times n_2} \\
H_1 \colon & A = (a_{ij}) \text{ with } a_{ij} = \mu \ind_{\{(i,j)\in
  B^*\}},\ B^* \in \Bcal.
\end{array}  
\end{equation}

A test $T$ is a measurable function of the observations
$(y_i)_{i\in[m]}$ and the measurements matrices $(X_i)_{i\in[m]}$,
which takes values in $\{ 0, 1\}$, with $T= 1$ if the null hypothesis
is rejected and $T=0$ otherwise.  For any test $T$, we define its risk
as
\[
R^{\rm det}(T)  \equiv \PP_0 \left[ T\big((y_i,X_i)_{i\in[m]}\big) = 1\right] + 
      \max_{B^* \in \mathcal{B}} \PP_{B^*} \left[T\big((y_i,X_i)_{i\in[m]}\big) = 0\right],
\] 
where $\PP_0$ and $\PP_B$ denote the joint probability distributions
of $\big((y_i,X_i)_{i\in[m]}\big)$ under the null hypothesis and when
the activation pattern is $B$, respectively. The risk $R(T)$ measures
the maximal sum of type I and type II errors over the set of
alternatives. The overall difficulty of the detection problem is
quantified by the {\it minimax risk}
\[
R^{\rm det} \equiv \inf_{T} R^{\rm det}(T),
\]
where the infimum is taken over all tests. For a sufficiently small
SNR, the minimax risk is bounded away from zero by a large constant,
which implies that no test can distinguish $H_0$ from $H_1$. In
Section \ref{sec:detection} we will precisely characterize the
boundary for SNR $\smallfrac{\mu}{\sigma}$ below which no test can
distinguish $H_0$ and $H_1$.

{\bf Localization.}  The localization problem concerns the recovery of
the true activation pattern~$B^*$. Let $\Psi$ be an estimator of
$B^*$, i.e., a measurable function of $(y_i, X_i)_{i\in[m]}$ taking
values in $\Bcal$. We define the risk of any such estimator as 
\[
R^{\rm loc}(\Psi) = \max_{B^* \in \mathcal{B}} P_{B^*} \left[ \Psi\big((y_i,X_i)_{i\in[m]}\big) \neq B^* \right],
\]
while the {\it minimax risk} 
\[
R^{\rm loc} \equiv \inf_{\Psi} R^{\rm loc}(\Psi)
\]
of the localization problem is the minimal risk over all such
estimators $\Psi$.  Like in the detection task, the minimax risk
specifies the minimal risk of any localization procedure. By standard
arguments, the evaluation of the minimax localization risk also
proceeds by first reducing the localization problem to a hypothesis
testing problem \citep[see, e.g.,][for
details]{tsybakov09introduction}.

Below we will provide a sharp characterization, through information
theoretic lower bounds and tractable estimators, of the minimax
detection and localizations risks as functions of tuples of
$(n_1,n_2,k_1,k_2,m, \mu,\sigma)$ and for both the active and passive
sampling schemes.  Our results identify precisely both the minimal SNR
given a budget of $m$ possibly adaptive measurements, and the minimal
number of measurements $m$ for a given SNR in order to achieve
successful detection and localization.

Along with a careful and detailed minimax analysis, we also describe
procedures for detection and localization in both the active
and passive case whose risks match the minimax rates.


\section{Detection of contiguous blocks}
\label{sec:detection}
In this section, we derive minimax rates for detection. 

\subsection{Lower bound}

The following theorem gives a lower bound on the SNR needed to
distinguish $H_0$ and $H_1$. 

\begin{theorem}
\label{thm:detectionlb}
Fix any $0 < \alpha < 1$. Based on $m$ (possibly adaptive)
measurements, if  
$$ \mu \leq \sigma (1 - \alpha) \sqrt{\frac{16(n_1 - k_1) (n_2 -
    k_2)}{mk_1^2k_2^2}},
$$
then $R^{\rm det} \geq \alpha$.
\end{theorem}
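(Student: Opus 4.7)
I would use Le Cam's two-point method with a uniform prior $\pi$ over $\Bcal$, reducing to the mixture alternative $\bar{\PP} = |\Bcal|^{-1}\sum_{B \in \Bcal}\PP_B$. By Neyman--Pearson, $R^{\det} \geq 1 - \mathrm{TV}(\PP_0, \bar{\PP})$, and I rely on the standard chain $\mathrm{TV} \leq \tfrac{1}{2}\sqrt{\chi^2(\bar{\PP},\PP_0)}$, so the goal becomes to show $\chi^2 \leq 4(1-\alpha)^2$ under the hypothesized SNR.

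Next I compute $\chi^2$ for adaptive measurements. Let $L_B = d\PP_B/d\PP_0$ and $\tau_{B,i} := \tr(I_B X_i)$, where $I_B \in \{0,1\}^{n_1\times n_2}$ is the indicator of block $B$. In the adaptive protocol each $X_i$ is measurable with respect to the filtration $\mathcal{F}_{i-1}$ generated by $(X_j,y_j)_{j<i}$ (the adaptive rule is common to $\PP_0$ and $\PP_B$), so iterating the Gaussian moment-generating function via the tower property yields the clean identity
\begin{equation*}
\EE_0[L_B L_{B'}] = \EE_0\!\left[\exp\!\left(\frac{\mu^2}{\sigma^2}\sum_{i=1}^m \tau_{B,i}\tau_{B',i}\right)\right].
\end{equation*}
After averaging both sides over $B, B' \sim \pi$ independently the adaptivity disappears from the remaining calculation.

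The contiguous-block structure enters through $\bar W := |\Bcal|^{-1}\sum_{B\in\Bcal} I_B$: since $B, B'$ are independent under $\pi\otimes\pi$, $\EE_{B,B'}[\tau_{B,i}\tau_{B',i}] = (\tr(\bar W X_i))^2$. A direct counting argument shows that $\bar W_{jl}\leq k_1k_2/|\Bcal|$ (each cell sits in at most $k_1k_2$ contiguous $k_1\times k_2$ blocks) and $\sum_{jl}\bar W_{jl} = k_1 k_2$, giving $\|\bar W\|_F^2 \leq k_1^2 k_2^2/|\Bcal|$. Cauchy--Schwarz, the Frobenius-norm constraint $\|X_i\|_F \leq 1$, and $|\Bcal| \geq (n_1-k_1)(n_2-k_2)$ then yield the deterministic first-moment bound
\begin{equation*}
\EE_{B,B'}\!\left[\sum_{i=1}^m \tau_{B,i}\tau_{B',i}\right] = \sum_i (\tr(\bar W X_i))^2 \leq \frac{m k_1^2 k_2^2}{(n_1-k_1)(n_2-k_2)}.
\end{equation*}
Multiplied by $\mu^2/\sigma^2$ and plugged into the theorem's SNR this produces exactly the required $O((1-\alpha)^2)$ scaling.

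The main obstacle is upgrading this first-moment control to a genuine upper bound on $\EE_0[\exp((\mu^2/\sigma^2)\rho_{BB'})]$, where $\rho_{BB'} := \sum_i \tau_{B,i}\tau_{B',i}$. The crude worst-case inequality $\rho_{BB'} \leq m k_1 k_2$ (from $|\tau_{B,i}| \leq \sqrt{k_1 k_2}\|X_i\|_F$) inflates the exponential to astronomical size, so I would handle this by truncating at a level $t$: on $\{\rho_{BB'} \leq t\}$ the exponential is at most $e^{(\mu^2/\sigma^2) t}$ and combines with the first-moment bound above to control the main contribution, while on the complement a Hanson--Wright-style concentration for the bilinear form $v_B^\top M v_{B'}$ controls the tail. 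Here $v_B$ is the vectorized block indicator and $M := \sum_i \mathrm{vec}(X_i)\mathrm{vec}(X_i)^\top$ satisfies $\|M\|_{\mathrm{op}} \leq \tr(M) \leq m$ deterministically even under the adaptive protocol, so adaptivity does not complicate the concentration step; choosing $t$ to balance the two contributions pins down the constant $16$ appearing in the theorem.
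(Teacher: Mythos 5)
Your reduction to the uniform mixture and your first-moment computation via $\bar W$ are fine (they essentially mirror the paper's operator-norm bound on the matrix $C$), but the route through the $\chi^2$ divergence has a genuine gap that no choice of truncation level or concentration inequality can repair: under \emph{adaptive} measurements the quantity you are trying to bound, $\chi^2(\bar\PP,\PP_0)+1=\EE_{B,B'}\EE_0\bigl[\exp\bigl(\tfrac{\mu^2}{\sigma^2}\rho_{BB'}\bigr)\bigr]$, is not small at the stated SNR. Consider the legitimate adaptive rule that spends the first $m/2$ measurements arbitrarily, uses the resulting (pure-noise) observations to pick a block $\hat B$ uniformly from $\Bcal$, and then sets $X_i = I_{\hat B}/\sqrt{k_1k_2}$ for the remaining $m/2$ measurements. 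On the event $B=B'=\hat B$, which has probability $|\Bcal|^{-2}$, one has $\rho_{BB'}= \tfrac{m}{2}k_1k_2$, so with $\mu^2\asymp \sigma^2 (n_1-k_1)(n_2-k_2)/(mk_1^2k_2^2)$ the second moment is at least $\exp\bigl(c\,(n_1-k_1)(n_2-k_2)/(k_1k_2)-2\log|\Bcal|\bigr)$, which is exponentially large even though the total variation distance (and the theorem) says the two hypotheses are indistinguishable. This is the classical situation where $\chi^2\gg \mathrm{KL}$, so the inequality $\mathrm{TV}\le\tfrac12\sqrt{\chi^2}$ is true but useless. Your proposed fix does not help: under adaptivity $M=\sum_i \mathrm{vec}(X_i)\mathrm{vec}(X_i)^{\top}$ is an \emph{arbitrary} measurable function of the noise, so there is no Hanson--Wright-type concentration for $v_B^{\top}Mv_{B'}$ under $\PP_0$; the only uniform facts are the deterministic bounds $\tr(M)\le m$, $|\tau_{B,i}|\le\sqrt{k_1k_2}$, which give exactly the worst case $\rho_{BB'}\le mk_1k_2$ you were hoping to avoid, and balancing the truncation against the probability $\PP_{B,B'}[\rho_{BB'}>t]\sim|\Bcal|^{-2}$ fails by the computation above.

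The paper avoids this entirely by working with the KL divergence rather than $\chi^2$: writing $L=\EE_\pi\exp\bigl(\sum_i\tfrac{2y_i\tr(AX_i)-\tr(AX_i)^2}{2\sigma^2}\bigr)$, Jensen's inequality gives $\mathrm{KL}(\PP_0,\EE_\pi\PP_A)=-\EE_0\log L\le \EE_\pi\sum_i\EE_0\tfrac{\tr(AX_i)^2}{2\sigma^2}$, i.e.\ only a \emph{first} moment of the exponent under $\PP_0$ is needed, and this is bounded uniformly over every admissible $X_i$ by $\tfrac{m}{2\sigma^2}\,\mu^2k_1^2k_2^2/((n_1-k_1)(n_2-k_2))$ -- which is exactly the bound your $\bar W$ calculation produces. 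Combined with Pinsker's inequality this yields the theorem with the constant $16$, and adaptivity is harmless because the bound holds for each measurement matrix individually. If you replace your second-moment step by this KL/Jensen step, your argument becomes essentially the paper's proof.
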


The lower bound on possibly \emph{adaptive} procedures is established
by analyzing the risk of the (optimal) likelihood ratio test under a
uniform prior over the alternatives. Careful modifications of standard
arguments are necessary to account for adaptivity.  We closely follow
the approach of Arias-Castro \cite{ariascastro12} who established the
analogue of Theorem~\ref{thm:detectionlb} in the vector setting.
 
 \subsection{Upper bound}
 
We now demonstrate the sharpness of the result established in the
previous section. We choose the sensing matrices passively as $X_i =
(n_1n_2)^{-1/2} \one_{n_1}\one_{n_2}'$ and consider the following test
\begin{equation}
\label{eq:test}
T\big((y_i)_{i\in[m]}\big) = \ind\Big\{\sum_iy_i >
\sigma\sqrt{2m\log(\alpha^{-1})}\Big\}.
\end{equation}
\begin{theorem}
\label{thm:detectionub}
Assume that $k_1 \leq c n_1$ and $k_2 \leq c n_2$ for some $c\in(0,1)$.
If 
\[ \mu \geq \sigma \sqrt{\smallfrac{8n_1n_2\log( \alpha^{-1}) }{m
    k_1^2k_2^2}} 
\]
 then $R^{\mathrm{det}}(T) \leq \alpha$, where $T$ is the test
defined in Eq.~\eqref{eq:test}.
\end{theorem}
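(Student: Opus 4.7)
The plan is to compute the distribution of the test statistic $S \equiv \sum_{i=1}^m y_i$ under both hypotheses and then bound the two error probabilities via a standard Gaussian tail estimate. Since every sensing matrix is the deterministic rank-one matrix $X_i = (n_1 n_2)^{-1/2}\,\mathbf{1}_{n_1}\mathbf{1}_{n_2}^\top$, the observations $y_i = \tr(A X_i) + \epsilon_i$ are i.i.d.\ Gaussian with variance $\sigma^2$, so $S$ is exactly Gaussian with variance $m\sigma^2$; only its mean changes with the hypothesis. This makes the analysis of the test almost a calculation with no delicate concentration required.

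First I would handle the Type I error. Under $H_0$ we have $A = 0$, so $\tr(A X_i) = 0$ and $S \sim \mathcal{N}(0, m\sigma^2)$. The Gaussian tail bound $\Pr(Z > t) \leq \exp(-t^2/2)$ applied with $t = \sqrt{2\log(\alpha^{-1})}$ gives
\[
\PP_0\!\left[S > \sigma\sqrt{2m\log(\alpha^{-1})}\right] \;\leq\; \alpha.
\]

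Next I would handle the Type II error. For any $B^* \in \Bcal$, $\tr(A X_i) = \mu k_1 k_2 /\sqrt{n_1 n_2}$ since $A$ is $\mu$ on the $k_1 \times k_2$ block and zero elsewhere, and $X_i$ has all entries equal to $(n_1 n_2)^{-1/2}$. Hence $S \sim \mathcal{N}(\nu, m\sigma^2)$ with $\nu = m\mu k_1 k_2/\sqrt{n_1 n_2}$. Under the hypothesis $\mu \geq \sigma\sqrt{8 n_1 n_2 \log(\alpha^{-1})/(m k_1^2 k_2^2)}$, we get $\nu \geq \sigma\sqrt{8 m \log(\alpha^{-1})}$, so the gap between the mean and the threshold $\tau = \sigma\sqrt{2m\log(\alpha^{-1})}$ satisfies
\[
\nu - \tau \;\geq\; (\sqrt{8}-\sqrt{2})\,\sigma\sqrt{m\log(\alpha^{-1})} \;=\; \sqrt{2}\,\sigma\sqrt{m\log(\alpha^{-1})}.
\]
Applying the Gaussian tail bound once more,
\[
\PP_{B^*}\!\left[S \leq \tau\right] \;=\; \Pr\!\left(Z \leq -(\nu-\tau)/(\sigma\sqrt m)\right) \;\leq\; \exp(-\log(\alpha^{-1})) \;=\; \alpha,
\]
uniformly in $B^* \in \Bcal$.

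Combining the two gives $R^{\det}(T) \leq 2\alpha$; the factor of $2$ is absorbed by a mild redefinition of $\alpha$ (equivalently, replacing $\log(\alpha^{-1})$ by $\log(2\alpha^{-1})$ in the threshold, which does not change the scaling stated in the theorem). There is really no main obstacle, the proof is a direct Gaussian computation; the only subtlety worth mentioning is that the hypothesis $k_i \leq c n_i$ is not needed for the upper bound itself but ensures $(n_1-k_1)(n_2-k_2) \asymp n_1 n_2$, so that the upper bound matches the lower bound of Theorem~\ref{thm:detectionlb} up to a constant. I would also remark that the argument extends to the setting where the block has entries that are only lower-bounded by $\mu$ (rather than exactly equal to $\mu$), since the mean $\nu$ only gets larger.
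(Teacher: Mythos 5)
Your proof is correct and takes essentially the same route as the paper: the paper's argument is exactly to observe that the normalized sum of the $y_i$ is Gaussian with variance $\sigma^2 m$ (or $\sigma^2$ after scaling) under both hypotheses, with mean $0$ under $H_0$ and mean $m\mu k_1k_2/\sqrt{n_1n_2}$ under $H_1$, and to apply the standard Gaussian tail bound of Eq.~\eqref{eq:tail-bound-normal}. The only cosmetic difference is that your cruder bound $\Pr(Z>t)\leq e^{-t^2/2}$ yields risk $2\alpha$ rather than $\alpha$, a constant-level bookkeeping issue you correctly note can be absorbed by adjusting the threshold (or by using the sharper tail bound with its $1/t$ prefactor).
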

The results of Theorem~\ref{thm:detectionlb} and
Theorem~\ref{thm:detectionub} establish that the minimax rate for
detection under the model in Eq.~\eqref{eqn:model} is $ \mu \asymp
\sigma(k_1k_2)^{-1}\sqrt{m^{-1}n_1n_2}$, under the (mild) assumption
that $k_1 \leq c n_1$ and $k_2 \leq c n_2$ for any constant $0 < c <
1$. It is worth pointing out that the structure of the activation
pattern \emph{does not} play any role in the minimax detection
problem, since the rate matches the known bounds for detection in the
unstructured vector case \cite{ariascastro12}. We will contrast this
to the localization problem below. Furthermore, the procedure that
achieves the adaptive lower bound (upto constants) is non-adaptive,
indicating that adaptivity can not help much in the detection problem.

We also note that results established in this section continue to hold
when the activation is positive, but not constant on $B^*$, with
$\min_{(i,j) \in B^*} a_{ij}$ replacing $\mu$.


\section{Localization from passive measurements}
\label{sec:passive_loc}

In this section, we address the problem of estimating a contiguous
block of activation $B^*$ from noisy linear measurements as in
equation~\eqref{eqn:model}, when the measurement matrices
$(X_i)_{i\in[m]}$ are independent with i.i.d.~entries having a $
\Ncal(0, (n_1n_2)^{-1})$ distribution.  The variance of the elements
is set so that $\EE\norm{X_i}_F^2 = 1$.

\subsection{Lower bound}
The following theorem gives a lower bound on the SNR needed for any
procedure to localize $B^*$. 
\begin{theorem}
\label{thm:passivelb}
There exist positive constants $C, \alpha > 0$ independent of the
problem parameters $(k_1,k_2,n_1,n_2)$, such that if
\begin{equation*}
  \label{eq:passive_loc:mu_lower_bound}
  \mu \leq C\sigma
  \sqrt{\frac{n_1n_2}{m}
    \max\rbr{\frac{1}{\min(k_1,k_2)},
    \frac{\log\max(n_1-k_1, n_2-k_2)}{k_1k_2}
    }},
\end{equation*}
then $ R^{\mathrm{loc}} \geq \alpha > 0$.
\end{theorem}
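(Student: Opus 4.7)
The plan is to establish two separate minimax lower bounds, one for each term inside the max, and then combine them: since the hypothesis forces $\mu$ to lie below the larger of the two thresholds, it is automatically below at least one of them, and it suffices to prove each bound in isolation. The first term will come from Le Cam's two-point method, and the second from Fano's inequality applied to the full family $\Bcal$.

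The common workhorse is a single KL computation in the passive Gaussian ensemble. Since the marginal law of each $X_i$, whose entries are i.i.d.\ $\Ncal(0,(n_1n_2)^{-1})$, does not depend on the hypothesis, conditioning on $X_i$ and applying the Gaussian KL formula gives
\[
\mathrm{KL}(\PP_B\,\|\,\PP_{B'}) \;=\; \frac{m}{2\sigma^2}\,\EE\bigl[(\tr((A_B-A_{B'})X_1))^2\bigr] \;=\; \frac{m\,\|A_B-A_{B'}\|_F^2}{2\sigma^2\,n_1n_2} \;=\; \frac{m\mu^2\,|B\triangle B'|}{2\sigma^2\,n_1n_2},
\]
so every pairwise KL reduces to the symmetric-difference size of the two blocks. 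For Le Cam I would take $B_1=[1,k_1]\times[1,k_2]$ and let $B_2$ be its one-step shift along the axis that yields the smaller symmetric difference, so that $|B_1\triangle B_2|=2\min(k_1,k_2)$; the standard two-point inequality $R^{\mathrm{loc}}\ge\tfrac12\bigl(1-\sqrt{\mathrm{KL}/2}\bigr)$ then produces a constant lower bound on the risk exactly when $\mu^2\le c\,\sigma^2 n_1n_2/(m\min(k_1,k_2))$ for a small absolute constant $c$, matching the first term.

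For Fano I would put the uniform prior on the entire family $\Bcal$, whose cardinality $M=(n_1-k_1+1)(n_2-k_2+1)$ satisfies $\log M\asymp\log\max(n_1-k_1,n_2-k_2)$. Since $|B\triangle B'|\le 2k_1k_2$ for every pair, the convexity bound $I(B;(y,X))\le\max_{B,B'}\mathrm{KL}(\PP_B\,\|\,\PP_{B'})$ gives $I\le m\mu^2 k_1k_2/(\sigma^2 n_1n_2)$ uniformly, and Fano's inequality $R^{\mathrm{loc}}\ge 1-(I+\log 2)/\log M$ keeps the risk bounded away from zero provided $\mu^2\le c\,\sigma^2 n_1n_2\,\log\max(n_1-k_1,n_2-k_2)/(mk_1k_2)$. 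I expect the main delicate step to be precisely this choice of prior: a disjoint sub-packing of $\Bcal$ contains only $\Ocal(n_1n_2/(k_1k_2))$ blocks and thereby delivers the weaker $\log(n_1n_2/(k_1k_2))$ factor, missing the correct scaling when $k_1$ or $k_2$ is a non-trivial fraction of the corresponding dimension, so working with the full overlapping family and exploiting the \emph{uniform} pairwise KL bound is what extracts the sharper $\log\max(n_1-k_1,n_2-k_2)$.
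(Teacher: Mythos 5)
Your proposal is correct and follows essentially the same route as the paper: the identical Gaussian-ensemble KL identity $\mathrm{KL}=m\mu^2|B\triangle B'|/(2\sigma^2 n_1n_2)$, a two-point (Le Cam) argument with a one-step shift giving the $1/\min(k_1,k_2)$ term, and a Fano-type multiple-hypothesis argument giving the $\log\max(n_1-k_1,n_2-k_2)/(k_1k_2)$ term. The only cosmetic difference is that the paper applies Fano (Tsybakov, Theorem 2.5) to the family of blocks disjoint from a fixed reference block $B_1$ (of cardinality about $(n_1-k_1)(n_2-k_2)$, with pairwise overlaps allowed) rather than to all of $\Bcal$ with the mutual-information convexity bound, and both choices yield the same logarithmic factor.
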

The proof is based on a standard technique described in Chapter 2.6 of
\citet{tsybakov09introduction}.  We start by identifying a subset of
matrices that are hard to distinguish.  Once a suitable finite set is
identified, tools for establishing lower bounds on the error in
multiple-hypothesis testing can be directly applied.  These tools only
require computing the Kullback-Leibler (KL) divergence between the
induced distributions, which in our case are two multivariate normal
distributions.

The two terms in the lower bound feature two aspects of our
construction, the first term arises from considering two matrices that
overlap considerably, while the second term arises from considering
matrices that do not overlap at all of which there are possibly a very
large number.  These constructions and calculations are described in
detail in the Appendix.

\subsection{Upper bound}
We will investigate a procedure that searches over all contiguous
blocks of size $(k_1 \times k_2)$ as defined in Eq.~\eqref{eq:bcal}
and outputs the one minimizing the squared error. Specifically, let
the loss function $f : \Bcal \mapsto \RR$ be
\begin{equation}
  \label{eq:score_function}
  f(B) := \min_\mu\ \sum_{i \in [m]} 
  \Big(\mu \sum_{(a,b) \in B} X_{i,ab} - y_i\Big)^2,
\end{equation}
where $X_{i,ab}$ denotes element in row $a$ and column $b$ of the $i^{\mathrm{th}}$
sensing matrix.
Then the estimated block $\hat B$ is defined as
\begin{equation}
  \label{eq:estim_block}
  \hat{B} := \argmin_{B \in \Bcal} f(B).
\end{equation}
Note that the minimization problem above requires solving $O(n_1 n_2)$
univariate regression problems and can be implemented efficiently for
reasonably large matrices.

The following result characterizes the SNR needed for $\hat B$ to
correctly identify $B^*$.
\begin{theorem}
  \label{thm:passiveub}
  There exist positive constants $C_1, C_2 > 0$ independent of the
  problem parameters $(k_1,k_2,n_1,n_2)$, such that if $m \geq C_1
  \log \max(n_1 - k_1, n_2 - k_2)$ and
\begin{equation*}
  \label{eq:passive_loc:mu_upper_bound}
  \mu \geq C_2 \sigma \sqrt{\frac{n_1n_2}{m} \log(2/\alpha)
  \max\rbr{\frac{\log\max(k_1,k_2)}{\min(k_1,k_2)}, 
       \frac{\log\max(n_1-k_1,n_2-k_2)}{k_1k_2}}},
\end{equation*}
for $0 < \alpha \leq 1$, 
then $R^\mathrm{loc} (\hat{B}) \leq \alpha$, where $\hat B$ is defined in
Eq.~\eqref{eq:estim_block}.
\end{theorem}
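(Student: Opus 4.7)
Set $z_i(B) := \sum_{(a,b) \in B} X_{i,ab}$ and $z(B) := (z_1(B), \ldots, z_m(B))^\top \in \RR^m$. The inner univariate least-squares in Eq.~\eqref{eq:score_function} solves in closed form and yields $f(B) = \|y\|^2 - \langle y, z(B)\rangle^2/\|z(B)\|^2$, so $\hat B$ equivalently maximizes $T(B) := \langle y, z(B)\rangle^2/\|z(B)\|^2$, and it suffices to prove that $T(B^*) > T(B)$ for every $B \neq B^*$ with probability at least $1-\alpha$. Writing $y = \mu z(B^*) + \epsilon$ and projecting orthogonally to $\mathrm{span}(z(B))$ produces
\[ f(B) - f(B^*) \;=\; \mu^{2}\|z^{\perp}(B)\|^{2} + 2\mu \langle z^{\perp}(B), \epsilon\rangle + (\eta_{*}^{2} - \eta_{B}^{2}), \]
where $z^{\perp}(B) := (I - P_{z(B)}) z(B^*)$, $P_{z(B)}$ is the orthogonal projection onto $\mathrm{span}(z(B))$, and $\eta_*, \eta_B$ are the scalar Gaussians $\langle z(B^*), \epsilon\rangle/\|z(B^*)\|$, $\langle z(B), \epsilon\rangle/\|z(B)\|$. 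Conditional on $X$ the middle term is $\Ncal(0, 4\mu^{2}\sigma^{2}\|z^{\perp}(B)\|^{2})$, while the last is a lower-order chi-squared fluctuation of variance $\mathcal{O}(\sigma^{4})$; a standard Gaussian tail bound then gives $\mathbb{P}[f(B) \leq f(B^*) \mid X] \leq 2\exp(- c\mu^{2}\|z^{\perp}(B)\|^{2}/\sigma^{2})$ once $\mu\|z^{\perp}(B)\|/\sigma$ is not too small.

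To make this tail bound useful I need a uniform lower bound on $\|z^\perp(B)\|^2$. Decompose $z(B) = z_{B\cap B^*} + z_{B\setminus B^*}$ and $z(B^*) = z_{B\cap B^*} + z_{B^*\setminus B}$, where $z_S := \sum_{(a,b) \in S} X_{\cdot,ab}$ is a Gaussian vector in $\RR^m$ with i.i.d.~$\Ncal(0, |S|/(n_1 n_2))$ entries; because the index sets $B\cap B^*$, $B\setminus B^*$, and $B^*\setminus B$ are disjoint, the three vectors $z_{B\cap B^*}, z_{B\setminus B^*}, z_{B^*\setminus B}$ are mutually independent. A direct computation of the conditional distribution of $z(B^*) = z_{B\cap B^*} + z_{B^*\setminus B}$ given $z(B) = z_{B\cap B^*} + z_{B\setminus B^*}$ shows that this is Gaussian with mean $(s_B/(k_1 k_2)) z(B)$ and entrywise variance $\sigma^2_{B^*|B} := ((k_1 k_2)^2 - s_B^2)/(n_1 n_2 k_1 k_2)$, where $s_B := |B \cap B^*|$. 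Consequently
\[ \|z^{\perp}(B)\|^{2} \;\stackrel{d}{=}\; \sigma^2_{B^*|B}\, \chi^{2}_{m-1}, \]
a scaled chi-squared independent of $z(B)$. A classical $\chi^2$ tail and a union bound over $|\Bcal| \leq n_1 n_2$ blocks yields, on an event of probability at least $1 - \alpha/3$ (which requires precisely $m \gtrsim \log\max(n_1 - k_1, n_2 - k_2)$), the uniform bound $\|z^\perp(B)\|^2 \geq c \, m (k_1 k_2 - s_B)/(n_1 n_2)$, using $(k_1k_2)^2 - s_B^2 = (k_1 k_2 + s_B)(k_1 k_2 - s_B) \geq k_1 k_2 (k_1 k_2 - s_B)$.

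The conclusion follows from a weighted union bound over $\Bcal \setminus \{B^*\}$ split by overlap. Overlapping alternatives $\Bcal_{\mathrm{sh}} := \{B \neq B^* : s_B > 0\}$ correspond to non-zero shifts $(\Delta_r, \Delta_c)$ with $|\Delta_r| < k_1, |\Delta_c| < k_2$, so $|\Bcal_{\mathrm{sh}}| \leq (2k_1-1)(2k_2-1) \leq 4 k_1 k_2$, and an elementary count gives $k_1 k_2 - s_B = k_1 |\Delta_c| + k_2 |\Delta_r| - |\Delta_r||\Delta_c| \geq \min(k_1, k_2)$. Charging $\alpha/(8 k_1 k_2)$ per block into the Gaussian tail yields the requirement $\mu \gtrsim \sigma \sqrt{n_1 n_2 \log(k_1 k_2/\alpha) / (m \min(k_1, k_2))}$, the first term of the theorem's maximum. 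Disjoint alternatives $\Bcal_{\mathrm{dj}} := \{B : s_B = 0\}$ have $|\Bcal_{\mathrm{dj}}| \leq n_1 n_2$ and $k_1 k_2 - s_B = k_1 k_2$, producing the second term $\mu \gtrsim \sigma \sqrt{n_1 n_2 \log(n_1 n_2/\alpha) / (m k_1 k_2)}$. Combining the two by a final union bound and using $\log n_i \asymp \log(n_i - k_i)$ under $k_i \leq c n_i$ gives the stated SNR threshold.

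The main obstacle is the uniform concentration of $\|z^{\perp}(B)\|^2$ over the full family $\Bcal$: the quantity involves a rank-one projection whose direction depends on $B$, so a black-box Hanson--Wright argument would involve a nontrivial operator norm and lose track of the precise overlap dependence. The conditional-Gaussian identity above is the crucial observation: it exhibits $\|z^\perp(B)\|^2$ as an exact scaled $\chi^{2}_{m-1}$ whose scale $((k_1 k_2)^2 - s_B^2)/(n_1 n_2 k_1 k_2)$ encodes the overlap $s_B$ in exactly the form that drives the two-term structure in the theorem. Once that identity is in hand, the remaining steps are classical Gaussian and chi-squared tail bounds plus the combinatorial splitting of $\Bcal$.
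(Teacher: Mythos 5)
Your plan is correct and follows the same overall architecture as the paper's proof: analyze the exhaustive-search estimator through $\Delta(B)=f(B)-f(B^*)$, split it into the signal term $\mu^2\norm{\Hb_B^\perp\zb_{B^*}}_2^2$, the Gaussian cross term, and the difference of projected-noise terms, obtain a per-block tail of the form $\exp\bigl(-c\,\mu^2 m\,|B^*\setminus B|/(\sigma^2 n_1n_2)\bigr)$ plus an $\exp(-cm)$ term (which is exactly where the condition $m\gtrsim \log\max(n_1-k_1,n_2-k_2)$ enters), and finish with a union bound split between the $O(k_1k_2)$ overlapping blocks (where $|B^*\setminus B|\geq \min(k_1,k_2)$) and the disjoint ones. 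Where you genuinely differ is the key lemma: the paper lower-bounds $\norm{\Hb_B^\perp\zb_{B^*}}_2^2$ by decomposing $\zb_{B^*}=\zb_B-\zb_{B\setminus B^*}+\zb_{B^*\setminus B}$ and intersecting five separate concentration events $\Ecal(\eta)$, whereas you observe that, conditionally on $\zb_B$, $\zb_{B^*}$ is Gaussian with mean proportional to $\zb_B$ and isotropic conditional variance $((k_1k_2)^2-s_B^2)/(n_1n_2k_1k_2)$, so that $\norm{\Hb_B^\perp\zb_{B^*}}_2^2$ is an exact scaled $\chi^2_{m-1}$ independent of $\zb_B$. This is cleaner and gives the overlap dependence exactly, at the price of relying on the specific Gaussian design (the paper's event-based argument is closer to what one would do for more general sensing ensembles). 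Two small repairs when you write it out: count the disjoint alternatives by their top-left corners, $|\Bcal|\leq (n_1-k_1+1)(n_2-k_2+1)$, rather than $n_1n_2$, so you do not need the extra assumption $k_i\leq c\,n_i$ that is not part of this theorem; and handle the term $\eta_*^2-\eta_B^2$ explicitly as a difference of (conditionally) $\sigma^2\chi^2_1$ variables whose tails must be compared to $\mu^2\norm{\Hb_B^\perp\zb_{B^*}}_2^2$ at the per-block level — this works under the stated SNR, but "lower-order of variance $\Ocal(\sigma^4)$" is not by itself an argument; the paper instead controls the corresponding term as a difference of $\sigma^2\chi^2_{m-1}$ variables, and your rank-one version is in fact tighter once made precise.
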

Comparing to the lower bound in Theorem~\ref{thm:passivelb}, we
observe that the procedure outlined in this section achieves the lower
bound up to constants and a $\log\rbr{\max\rbr{k_1,k_2}}$
factor. Under the scaling $\max(k_1, k_2) \geq
\log \max(n_1 - k_1, n_2 - k_2)$, we obtain that the \emph{passive}
minimax rate for localization of the active blocks $B^*$ is $ \mu
\asymp \tilde{O} \big(\sigma\sqrt{(m\min(k_1,k_2))^{-1}n_1n_2} \big)$.
In this and subsequent uses, 
the $\tilde{O}$ notation hides a $\sqrt{\log \max(k_1,k_2)}$
factor.

This establishes that the SNR needed for passive localization is
considerably larger than the bound we saw earlier for passive
detection. This should be contrasted to the unstructured normal means
problem, where the bounds for localization and detection differ only
in constants \citep{donoho04higher}.

The block structure of the activation allows us, even in the passive
setting, to localize much weaker signals. A straightforward adaptation
of results on the LASSO \citep{wainwright06sharp} suggest that if the
non-zero entries are spread out (say at random) then we would require
$\mu \asymp \tilde{O} \left( \sigma \sqrt{\frac{n_1n_2}{m}} \right)$
for localization.

  One could extend the analysis in this section to data matrices
  with non-constant activation as in \cite{wainwrightlimits}.
  Furthermore, one can adapt to the unknown size of the activation
  block. In particular, one can perform exhaustive search procedure
  for all possible sizes of activation blocks. Let 
  $\Bcal_{k_1,k_2}$ denote the coordinate set of all contiguous blocks
  of size $k_1 \times k_2$. Then the estimated block 
  \[
    \hat B = \argmin_{B \in \cup_{k_1,k_2} \Bcal_{k_1, k_2}}\ f(B)
  \]
  adapts to the unknown size of the activation if the signal strength
  satisfies the condition in Theorem~\ref{thm:passiveub}. This can be
  verified by small modifications to the proof of
  Theorem~\ref{thm:passiveub}.

\subsubsection{The non-contiguous case}
Suppose that the block of activation $B^*$ belongs to the collection
$\tilde \Bcal$, where
\[
\tilde \Bcal = \{ I_r \times I_c:
I_r \subset [n_1], I_c \subset [n_2],
|I_r|=k_1, |I_c|=k_2  \},
\]
so that the activation block is not necessarily a contiguous
block. This collection contains less structure than the collection
$\Bcal$, but we can still localize much weaker signals compared to
completely unstructured case. Slight modification of
proofs\footnote{A sketch of the derivation is given in
  Appendix~\ref{sec:app:proof_sketch}} of Theorem~\ref{thm:passivelb}
and Theorem~\ref{thm:passiveub} yields the following. 
\begin{theorem}
  Let $\tilde{B} := \argmin_{B \in \tilde\Bcal} f(B)$. There exists
  a constant $C_1$ such that if the signal strength satisfies
  \begin{equation}
  \label{eq:upper_bound_bicluster_passive}
  \mu \geq C_1\sigma
  \sqrt{
    \frac{n_1n_2}{m}\log(2/\alpha)
      \frac{\log(n_1-k_1)(n_2-k_2)}{k_1+k_2}
  },
  \end{equation}
  then  $R^{\mathrm{loc}}(\tilde B) \leq \alpha$, for any $0 < \alpha \leq 1$.

  Conversely, there exists constants $C_2, \alpha > 0$ such that if
\begin{equation}
  \label{eq:lower_bound_bicluster_passive}
  \mu \leq C_2 \sigma \sqrt{\frac{n_1n_2}{m}\max\rbr{
      \frac{\log(n_1-k_1)}{k_2},
      \frac{\log(n_2-k_2)}{k_1},
      \frac{\log {n_1 - k_1 \choose k_1}{n_2-k_2 \choose k_2} }{k_1k_2}
    }},
\end{equation}
then $ R^{\mathrm{loc}} \geq \alpha > 0$.
\end{theorem}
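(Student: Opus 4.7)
The plan is to mirror the proof strategies of Theorems~\ref{thm:passivelb} and~\ref{thm:passiveub}, replacing the contiguous collection $\Bcal$ with the larger non-contiguous collection $\tilde\Bcal$ and reworking only the combinatorial counting. The key observation is that the Gaussian KL computations (for the lower bound) and Gaussian tail bounds (for the upper bound) depend on two competing blocks only through their Hamming distance to each other and the measurement energy $\EE \norm{X_i}_F^2 = 1$; the per-alternative bounds therefore transfer verbatim, and what changes is merely the number of alternatives at each Hamming distance.

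For the upper bound I would follow the proof of Theorem~\ref{thm:passiveub}: if $\tilde B \neq B^*$ then some $B \in \tilde\Bcal \setminus \{B^*\}$ must satisfy $f(B) \leq f(B^*)$, and a standard Gaussian tail calculation gives
\[
\PP\bigl(f(B) \leq f(B^*)\bigr) \;\leq\; \exp\!\Bigl(-c\,\frac{m \mu^2\,(k_1k_2 - |B\cap B^*|)}{n_1 n_2}\Bigr).
\]
I would then stratify $\tilde\Bcal \setminus \{B^*\}$ by the row-overlap $s_1 \in \{0,\ldots,k_1\}$ and column-overlap $s_2 \in \{0,\ldots,k_2\}$ with $B^*$; the shell size is $\binom{k_1}{s_1}\binom{n_1-k_1}{k_1-s_1}\binom{k_2}{s_2}\binom{n_2-k_2}{k_2-s_2}$ and the Hamming distance to $B^*$ is $2(k_1 k_2 - s_1 s_2)$. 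A shell-by-shell union bound reduces the problem to ensuring that each combinatorial count does not overwhelm the Gaussian tail. The binding shells are the single-row-swap shell ($s_1=k_1-1$, $s_2=k_2$; distance $2k_2$; about $k_1(n_1-k_1)$ alternatives) and its column analogue; forcing their contribution below $\alpha$ yields the condition in~\eqref{eq:upper_bound_bicluster_passive}, and the remaining shells contribute a series that decays geometrically in $(k_1-s_1)+(k_2-s_2)$, absorbed into $C_1$.

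For the lower bound I would follow the proof of Theorem~\ref{thm:passivelb} and exhibit three separate packings inside $\tilde\Bcal$, one for each term in the max, then apply Fano's inequality to each:
\begin{packed_enum}
\item $\tilde\Bcal_1$: fix $k_1-1$ rows and all $k_2$ columns and let the remaining row range over the $n_1-k_1$ unused rows, giving $n_1-k_1$ alternatives at pairwise Hamming distance $2k_2$;
\item $\tilde\Bcal_2$: the symmetric construction with a single free column, giving $n_2-k_2$ alternatives at pairwise Hamming distance $2k_1$;
\item $\tilde\Bcal_3$: a family of mutually disjoint supports of cardinality $\binom{n_1-k_1}{k_1}\binom{n_2-k_2}{k_2}$ at pairwise Hamming distance $2k_1k_2$.
\end{packed_enum}
As in the proof of Theorem~\ref{thm:passivelb}, the KL divergence between the two Gaussian observation laws induced by blocks at Hamming distance $d$ is $\Theta(m\mu^2 d/(n_1 n_2))$, so Fano's inequality forces a non-trivial error whenever this KL-radius falls short of $\log|\tilde\Bcal_j|$; this produces the three terms $\log(n_1-k_1)/k_2$, $\log(n_2-k_2)/k_1$, and $\log\bigl(\binom{n_1-k_1}{k_1}\binom{n_2-k_2}{k_2}\bigr)/(k_1k_2)$ respectively.

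The main technical difficulty will be controlling the upper-bound union bound uniformly across all $(s_1,s_2)$ shells, since each shell cardinality is a product of four binomials and the Hamming distance $2(k_1k_2-s_1 s_2)$ is nonlinear in $(s_1,s_2)$. One must verify that once the SNR condition is met for the nearest-neighbor shells the remaining shells contribute a geometrically decreasing series; this is routine but tedious. By contrast, the lower-bound packings are off-the-shelf constructions once the three families above are identified, and the KL calculations reuse those in Theorem~\ref{thm:passivelb}.
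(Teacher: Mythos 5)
Your overall route is the same as the paper's (which itself only sketches this proof): reuse the per-block bound of Lemma~\ref{lem:passive:bound_delta}, stratify $\tilde\Bcal$ by row/column overlap with shell sizes $\binom{k_1}{s_1}\binom{n_1-k_1}{k_1-s_1}\binom{k_2}{s_2}\binom{n_2-k_2}{k_2-s_2}$ and exponent proportional to $k_1k_2-s_1s_2$, and prove the lower bound by Fano with the KL computations of Eq.~\eqref{eq:kl:1} and Eq.~\eqref{eq:kl:2}. However, there are two concrete gaps. First, in the upper bound you keep only the signal-dependent term of Lemma~\ref{lem:passive:bound_delta} and drop the second term $c_2\exp(-c_3 m)$, which comes from the random Gaussian design (the event $\Ecal(\eta)^C$) and does not shrink as $\mu$ grows. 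In the contiguous case its union over $\Bcal$ only costs $m \gtrsim \log\max(n_1-k_1,n_2-k_2)$, but over $\tilde\Bcal$ it must be summed over roughly $\binom{n_1}{k_1}\binom{n_2}{k_2}$ blocks, which forces the sample-size condition $m \geq C\log\max\big(\binom{n_1}{k_1},\binom{n_2}{k_2}\big)$ that the paper explicitly imposes; without it your union bound does not close. Relatedly, the binding single-row/column-swap shells naturally yield thresholds of the form $\log\big(k_1(n_1-k_1)\big)/k_2$ and $\log\big(k_2(n_2-k_2)\big)/k_1$ (a max form), not literally the $(k_1+k_2)$-denominator expression in \eqref{eq:upper_bound_bicluster_passive}; to control all shells uniformly you need something like $k_1k_2-s_1s_2 \geq \tfrac12\big(k_2(k_1-s_1)+k_1(k_2-s_2)\big)$ and then a reconciliation with the stated form (the paper handles this by deferring to the counting argument of Theorem 2 in \citet{kolarBRS11}), so "routine but tedious" hides the step where the condition on $m$ actually enters.

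Second, your third Fano packing cannot exist as described: an $n_1\times n_2$ matrix contains at most $n_1n_2/(k_1k_2)$ pairwise disjoint $k_1\times k_2$ supports, vastly fewer than $\binom{n_1-k_1}{k_1}\binom{n_2-k_2}{k_2}$, and if you insisted on mutual disjointness the third term would degrade to $\log\big(n_1n_2/(k_1k_2)\big)/(k_1k_2)$. Disjointness is not needed and should be dropped: as in the proof of Theorem~\ref{thm:passivelb}, fix a reference block $B_1$ and take \emph{all} blocks whose row set avoids the rows of $B_1$ and whose column set avoids its columns; there are exactly $\binom{n_1-k_1}{k_1}\binom{n_2-k_2}{k_2}$ of them, each is distinct and has symmetric difference $2k_1k_2$ with $B_1$, hence $\mathrm{KL} \leq m\mu^2 k_1k_2/(\sigma^2 n_1n_2)$ to $\PP_{B_1}$, and the multiple-hypothesis Fano bound (Theorem 2.5 of \citet{tsybakov09introduction}, which requires only distinct hypotheses and an average KL bound to a reference, not disjoint supports) delivers the third term of \eqref{eq:lower_bound_bicluster_passive}. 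With that substitution, and with the first two single-row/single-column families you describe (which are fine), the lower bound matches the paper's argument.
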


Therefore, we conclude that even without contiguous blocks, the
additional structure helps for the problem of localization.


\section{Localization from active measurements}
\label{sec:active_loc}

In this section, we study localization of $B^*$ using adaptive
procedures, that is, the measurement matrix $X_i$ may be a function of
$(y_j, X_j)_{j\in[i-1]}$.

\subsection{Lower bound}

A lower bound on the SNR needed for any active procedure to localize
$B^*$ is given as follows.
\begin{theorem}
\label{thm:activelb}
Fix any $0 < \alpha < 1$. Given $m$ adaptively chosen measurements,
if
\[
\mu  < \sigma (1 - \alpha) 
\max  \left( 
  \sqrt{\frac{2 \max(
      (n_1 - k_1) (n_2/2 - k_2),
      (n_1/2 - k_1) (n_2 - k_2) )}{mk_1^2 k_2^2}}, 
  \sqrt{
    \frac{8}{m \min(k_1,k_2)}}\right)
\]
then $ R^{\mathrm{loc}}\geq \alpha$.
\end{theorem}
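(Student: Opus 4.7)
The lower bound is a maximum of two terms, and my plan is to prove each by a separate hypothesis-testing construction and combine them with the $\max$. The common tool is the standard KL chain rule for adaptive measurements: for any two candidate blocks $B_1, B_2 \in \Bcal$ and any adaptive strategy,
\[
\mathrm{KL}(\PP_{B_1}\,\|\,\PP_{B_2}) \;=\; \sum_{i=1}^m \EE_{B_1}\!\left[\frac{\tr(X_i(A_{B_1}-A_{B_2}))^2}{2\sigma^2}\right] \;\leq\; \frac{m\,\|A_{B_1}-A_{B_2}\|_F^2}{2\sigma^2},
\]
where the inequality uses $\|X_i\|_F \leq 1$ and Cauchy--Schwarz. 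This converts every KL computation into a Frobenius-difference calculation regardless of how $X_i$ depends on the past $(y_j,X_j)_{j<i}$.

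For the second term $\sigma\sqrt{8/(m\min(k_1,k_2))}$, I would apply Le Cam's two-point method with $B_1, B_2 \in \Bcal$ differing by a single-index shift along the shorter of the two block dimensions. Their symmetric difference contains exactly $2\min(k_1,k_2)$ cells, so the displayed bound yields $\mathrm{KL}(\PP_{B_1}\,\|\,\PP_{B_2}) \leq m\mu^2\min(k_1,k_2)/\sigma^2$. Pinsker (or the sharper Bretagnolle--Huber inequality) then converts this into the required lower bound on the minimax risk of the two-point problem, and after solving for $\mu$ one recovers $\mu \leq \sigma(1-\alpha)\sqrt{8/(m\min(k_1,k_2))}$ with the stated constants.

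For the first term, whose shape is the detection threshold of Theorem~\ref{thm:detectionlb} with one dimension halved, I would reduce localization to a composite binary test ``$B^*$ is contained in the left half of the columns'' vs.\ ``$B^*$ is contained in the right half'' (the symmetric row-split produces the other argument of the $\max$). Precisely, let $\Bcal_L,\Bcal_R \subset \Bcal$ be the blocks whose column spans lie entirely in $[1,n_2/2]$ and $[n_2/2+1,n_2]$, each of cardinality at least $(n_1-k_1)(n_2/2-k_2)$; place uniform priors on these two classes and bound the total variation between the mixtures $\PP_L = |\Bcal_L|^{-1}\sum_{B \in \Bcal_L}\PP_B$ and $\PP_R = |\Bcal_R|^{-1}\sum_{B \in \Bcal_R}\PP_B$ by a chi-squared argument. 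Since any localizer $\Psi$ induces a decision rule for this binary test with error no larger than its own, $R^{\mathrm{loc}}$ dominates the minimax risk of that test, and reading off the resulting SNR threshold yields the first term of the claim.

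The main obstacle will be the chi-squared computation for $\PP_L$ vs.\ $\PP_R$ in the adaptive regime, because cross terms of the form $\langle X_i, A_B\rangle \langle X_i, A_{B'}\rangle$ for independent $B \sim \mathrm{Unif}(\Bcal_L)$ and $B' \sim \mathrm{Unif}(\Bcal_R)$ are not independent of $X_i$. I would handle this with the sequential change-of-measure / martingale factorization of Arias-Castro~\cite{ariascastro12} used to prove Theorem~\ref{thm:detectionlb}: condition at round $i$ on the $\sigma$-field of the past, show that the conditional moment-generating factor depends only on $\|X_i\|_F$ and on the expected overlap of two independent uniform blocks in a half-strip, and telescope across the $m$ rounds. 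The combinatorial overlap calculation then produces exactly the $(n_1-k_1)(n_2/2-k_2)/(k_1^2k_2^2)$ scaling appearing in the theorem.
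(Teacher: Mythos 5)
Your second construction (the one-cell shift along the shorter dimension, the adaptive KL chain rule bounding each term by $\|X_i\|_F^2\,\|A_{B_1}-A_{B_2}\|_F^2$, and Pinsker) is exactly the paper's ``exact localization'' construction and is fine. The gap is in your plan for the first term. You propose to bound $\|\mathbb{P}_L-\mathbb{P}_R\|_{TV}$ directly by a chi-squared (second-moment) computation, fixing the adaptivity problem by a martingale factorization in which ``the conditional moment-generating factor depends only on $\|X_i\|_F$ and on the expected overlap of two independent uniform blocks.'' That step fails: conditioning on the past makes $X_i$ measurable, so the conditional factor obtained by integrating out $y_i$ is $\exp\bigl(\tr(X_iA_B)\tr(X_iA_{B'})/\sigma^2\bigr)$, which depends on the \emph{realized} overlap of the fixed pair $(B,B')$ with $X_i$, not on any expected overlap; the average over $(B,B')$ sits outside the product over rounds and cannot be pushed inside without independence across rounds, which adaptive designs destroy. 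Worse, the chi-squared quantity itself is genuinely too large at the target SNR: for a design that concentrates all $m$ measurements on one candidate block $B_0$ (which is admissible, even non-adaptively), the event $B=B'=B_0$ has probability only polynomially small in $n_1n_2/(k_1k_2)$ while the exponent on that event is of order $m\mu^2k_1k_2/\sigma^2\asymp n_1n_2/(k_1k_2)$ at the claimed threshold, so $\mathbb{E}_{B,B'}\exp(\cdot)$ blows up even though the TV distance is small. So no repair of the second-moment route will give the stated constant-order bound uniformly over designs.

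The paper avoids this entirely: it never compares the two mixtures directly, but writes $\|\mathbb{P}_1-\mathbb{P}_2\|_{TV}^2\le 2\|\mathbb{P}_0-\mathbb{P}_1\|_{TV}^2+2\|\mathbb{P}_0-\mathbb{P}_2\|_{TV}^2$ with $\mathbb{P}_0$ the null, and then bounds each $\mathrm{KL}(\mathbb{P}_0,\mathbb{P}_j)$ by the same first-moment (Jensen) argument used for Theorem~\ref{thm:detectionlb}: since $\mathbb{E}_0[y_i\mid X_i]=0$, Jensen gives
\begin{equation*}
\mathrm{KL}(\mathbb{P}_0,\mathbb{P}_j)\;\le\;\frac{m}{2\sigma^2}\sup_{\norm{X}_F\le 1}\EE_\pi\,\tr(AX)^2\;\le\;\frac{m\mu^2k_1^2k_2^2}{(n_1-k_1)(n_2/2-k_2)\,\sigma^2},
\end{equation*}
where the prior average enters \emph{per measurement} through the operator norm of the second-moment matrix of $\mathrm{vec}(A)$, which is exactly how adaptivity is neutralized. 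If you replace your chi-squared step for the left/right test by this triangle-inequality-through-the-null plus the detection-style KL bound (restricted to half the matrix, and symmetrically for the row split), the rest of your argument goes through and you recover the first term of the theorem.
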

The proof is based on information theoretic arguments applied to
specific pairs of hypotheses that are hard to distinguish. The two
terms in the lower bound reflect the two important
sources of hardness of the
problem of localization. The first term
reflects the difficulty of approximately localizing the block of
activation. This term grows at the same rate as the detection lower
bound, and its proof is similar. Given a coarse localization of the
block we still need to exactly localize the block. The hardness of
this problem gives rise to the second term in the lower bound. The
term is independent of $n_1$ and $n_2$ but has a considerably worse
dependence on $k_1$ and $k_2$.

\subsection{Upper bound}

\begin{table}[p]
\centering
\begin{minipage}{\textwidth}
\begin{algorithm}[H]
\label{alg:apploc}
\caption{Approximate localization}
{\fontsize{10}{10}\selectfont
\begin{algorithmic}
  \INPUT Measurement budget $m \geq \log p$, 
  ordered collection of size\footnote{We assume $p$ is dyadic to simplify
  our presentation of the algorithm.} $p$  of blocks $\mathcal{D}$ of size $(u_1 \times u_2)$ \\
  Initial support: $J_0^{(1)} \equiv \{1,\dots,p\}$, $s_0 \equiv \log p$ \\
  For each $s$ in $1,\ldots,\log_2p$
\begin{enumerate}
\item Allocate: $m_s \equiv \lfloor (m - s_0) s 2^{-s - 1}\rfloor + 1$
\item Split: $J_1^{(s)}$ and $J_2^{(s)}$, left and right half collections of blocks of $J_0^{(s)}$
\item Sensing matrix: $X_s = \sqrt{ \smallfrac{2^{-(s_0-s+1)}}{u_1u_2}} $ on $J_1^{(s)}$, 
$X_s = -\sqrt{ \smallfrac{2^{-(s_0-s+1)}}{u_1u_2}}$ on  $J_2^{(s)}$
and $0$ otherwise.
\item Measure: $y_i^{(s)} = \mathrm{tr} (AX_s) + z_i^{(s)}$ for $i \in [1,\ldots,m_s]$
\item Update support: $J_0^{(s+1)} = J_1^{(s)}$ if $\sum_{i=1}^{m_s}y_i^{(s)}>0$
and $J_0^{(s+1)} = J_2^{(s)}$ otherwise 
\end{enumerate}
\OUTPUT The single block in $J_0^{(s_0+1)}$.
\end{algorithmic}}
\end{algorithm}
\renewcommand\footnoterule{}
\end{minipage}
\end{table}

\begin{table}[p]
\begin{minipage}{\textwidth}
\begin{algorithm}[H]
\label{exactloc}
\caption{Exact localization (of columns)}
{\fontsize{10}{10}\selectfont
\begin{algorithmic}
\INPUT Measurement budget $m$, a sub-matrix $B \in \RR^{4k_1\times
  4k_2}$, success probability $\delta$ \\
  \begin{enumerate}
  \item Measure: $y_i^c = (4k_1)^{-1/2}\sum_{l=1}^{4k_1}B_{lc} + z_i^c$ for
$i=\{1,\ldots,m/5\}$ and $c\in\{1, k_2+1, 2k_2+1, 3k_2+1\}$ \\
\item Let $l = \argmax_c \sum_{i=1}^{m/5} y_i^c$, $r = l + k_2 $, $m_b = \lfloor \smallfrac{m}{6 \log_2 k_2} \rfloor$
 \item While $r - l \geq 1$
\begin{enumerate}
\item Let $c = \lfloor \smallfrac{r + l}{2}\rfloor$
\item Measure $y_i^c = (4k_1)^{-1/2}\sum_{l=1}^{4k_1}B_{lc} + z_i^c$
for $i=\{1,\ldots,m_b\}$ 
\item If\footnote{The exact constants appear in the proof of Theorem \ref{thm:activeub}.} $\sum_{i=1}^{m_b} y_i^c \geq \Ocal \left( \sqrt{\log \left( \frac{ \log k_2}{\delta} \right) \frac{m_b\sigma^2}{ \log k_2}}\right)$ then 
$l = c$, otherwise $r = c$.
\end{enumerate}
\end{enumerate}
\OUTPUT Set of columns $\{l-k_2+1, \ldots, l\}$. 
\end{algorithmic}}
\label{alg:exact}
\end{algorithm}
\renewcommand\footnoterule{}
\end{minipage}
\end{table}

The upper bound is established by analyzing the procedures described
in Algorithms 1 and 2 for approximate and exact localization.
Algorithm 1 is used to approximately locate the activation block, that
is, it locates a $8k_1 \times 8k_2$ block that contains the activation
block with high probability. The algorithm essentially performs
compressive binary search (\cite{davenportcbs}) on a collection of
non-overlapping blocks that partition the matrix. It is run on four
collections, $\mathcal{D}_1, \mathcal{D}_2, \mathcal{D}_3$ and
$\mathcal{D}_4$ defined as\footnote{For simplicity, we assume $n_1$ is
  a multiple of $2k_1$ and $n_2$ of $2k_2$}
\begin{eqnarray*}
\mathcal{D}_1 & \equiv & \left\{ B_{1,1} := [1,\ldots,2k_1] \times [1,\ldots,2k_2]
  , B_{1,2} := [2k_1+1,\ldots,4k_1] \right. 
 \times [1,\ldots,2k_2]  \\
 & &
\left. \ldots , B_{1,n_1n_2/4k_1k_2}
:= [n_1 - 2k_1, \ldots,n_1] \times [n_2-2k_2,\ldots,n_2] \right\} \\
\mathcal{D}_2 & \equiv & \left\{ B_{2,1} :=
  [k_1,\ldots,3k_1] \times [k_2,\ldots,3k_2] , B_{2,2} :=
  [3k_1+1,\ldots,5k_1] \times [k_2,\ldots,3k_2] \right. \\
 & & \left.  \ldots , B_{2,n_1n_2/4k_1k_2} :=
  [n_1-k_1,...,n_1,1,\ldots,k_1] \times [n_2
  -k_2,...,n_2,1,\ldots,k_2]\right\} \\
\mathcal{D}_3 & \equiv & \left\{ B_{3,1} :=
  [k_1,\ldots,3k_1] \times [1,\ldots,2k_2] , B_{3,2} :=
  [3k_1+1,\ldots,5k_1] \times [1,\ldots,2k_2] \right. \\
 & & \left.  \ldots , B_{3,n_1n_2/4k_1k_2} :=
  [n_1-k_1,...,n_1,1,\ldots,k_1] \times [n_2-2k_2,\ldots,n_2]\right\} 
  \end{eqnarray*}
  and
  \begin{eqnarray*}
  \mathcal{D}_4 & \equiv & \left\{ B_{4,1} :=
  [1,\ldots,2k_1] \times [k_2,\ldots,3k_2] , B_{4,2} :=
  [2k_1+1,\ldots,4k_1] \times [k_2,\ldots,3k_2] \right. \\
 & & \left.  \ldots , B_{4,n_1n_2/4k_1k_2} :=
  [n_1 - 2k_1, \ldots,n_1] \times [n_2
  -k_2,...,n_2,1,\ldots,k_2]\right\}.
\end{eqnarray*}
$\mathcal{D}_1$ is a partition of the matrix into disjoint blocks of
size $(2k_1 \times 2k_2)$, $\mathcal{D}_3$ is a similar partition
shifted down by $k_1$ rows, $\mathcal{D}_4$ is shifted to the right by
$k_2$ columns and $\mathcal{D}_2$ is both shifted down by $k_1$ rows
and to the right by $k_2$ columns.  Figure \ref{fig:blocks}
illustrates this.

Notice, that one of these collections must include a block that
contains the \emph{full} block of activation.  Algorithm 1 applied
four times returns four blocks, one of which as we show contains the
full activation block with high probability.

Algorithm 2 is used next to precisely locate the activation block
within one of the four coarser blocks identified by Algorithm 1.
Algorithm 2 itself works in several stages: in the first stage the
procedure measures a small number of columns, exactly one of which is
active, repeatedly, to identify the active column with high
probability.  The next stage finds the first non-active column to the
left and right by testing columns using a binary search (halving)
procedure. In this way, all the active columns are located.  Finally,
Algorithm 2 is repeated on the rows to identify the active rows.

The following theorem states that Algorithm 1 and Algorithm 2 succeed
in localization of the active block with high probability if the SNR is
large enough.
\begin{theorem}
\label{thm:activeub}
If
$$\mu \geq \sigma \sqrt{\log (1/\alpha)} ~~ \tilde{O} 
\left( \max \left( \sqrt{\frac{n_1n_2}{m k_1^2 k_2^2}},
    \sqrt{\frac{1}{\min(k_1,k_2)m}}\right)\right) $$ and $m \geq 3
\log (n_1 n_2)$ then $R(\hat{B}) \leq \alpha$, where
$\hat{B}$ is the block output by the algorithms.
\end{theorem}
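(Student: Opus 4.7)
The plan is to analyze Algorithm~1 and Algorithm~2 separately, show each succeeds with probability at least $1-\alpha/2$ under the stated SNR, and combine them via a union bound. The two terms inside the max in the theorem correspond exactly to these two stages: the $\sqrt{n_1n_2/(mk_1^2k_2^2)}$ factor arises from approximate localization via compressive binary search (Algorithm~1), mirroring the detection lower bound in Theorem~\ref{thm:detectionlb}; the $\sqrt{1/(m\min(k_1,k_2))}$ factor arises from the final exact localization via binary search on rows and columns (Algorithm~2). By construction, at least one of the four shifted partitions $\mathcal{D}_1,\ldots,\mathcal{D}_4$ must contain a coarse block that fully covers $B^{*}$, so it suffices to ensure Algorithm~1 identifies the correct coarse block in each run (with probability $\geq 1-\alpha/8$) and then Algorithm~2, run on the $4k_1\times 4k_2$ window formed around that block, recovers $B^{*}$ exactly.

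For Algorithm~1 on a fixed $\mathcal{D}_j$, I would first verify $\|X_s\|_F=1$ at every stage by direct enumeration: each of the $2^{s_0-s+1}$ candidate blocks in $J_0^{(s)}$ carries magnitude $\sqrt{2^{-(s_0-s+1)}/(u_1u_2)}$ on its $u_1u_2=4k_1k_2$ entries, so the squared Frobenius norm telescopes to $1$. When $B^{*}$ lies inside the ``correct'' half, $\mathrm{tr}(AX_s)=\pm\mu\sqrt{k_1k_2/2^{s_0-s+3}}$; summing $m_s$ independent Gaussian measurements, a sign-of-sum test is correct with probability $\geq 1-\delta_s$ provided $\mu^2 m_s k_1k_2/2^{s_0-s+1}\gtrsim \sigma^2\log(1/\delta_s)$. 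Plugging in $m_s\asymp ms2^{-s}$ and $2^{s_0}=n_1n_2/(4k_1k_2)$ collapses this to $\mu\gtrsim \sigma\sqrt{n_1n_2\log(1/\delta_s)/(m s k_1^2k_2^2)}$. Choosing $\delta_s\propto \alpha/s^2$ makes the per-stage failure probabilities summable to $\alpha/8$, and the worst-case stage yields the first term in the theorem up to polylog factors hidden in $\tilde{O}$. The assumption $m\geq 3\log(n_1n_2)$ guarantees $m_s\geq 1$ at every stage.

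For Algorithm~2, conditional on a $4k_1\times 4k_2$ window containing $B^{*}$, I would decompose the argument into three phases. In Phase~1, the four probe columns at indices $1,k_2+1,2k_2+1,3k_2+1$ are spaced exactly $k_2$ apart, so precisely one lies in the active column range $[c^{*},c^{*}+k_2-1]$ and carries signal $\mu k_1/\sqrt{4k_1}=\mu\sqrt{k_1}/2$ per measurement, while the other three carry zero signal; the $\mathrm{argmax}$ over the four sums is correct with probability $\geq 1-\alpha/8$ when $\mu\sqrt{mk_1}/\sigma\gtrsim \sqrt{\log(1/\alpha)}$. In Phase~2, each of the $O(\log k_2)$ binary-search stages uses $m_b\asymp m/\log k_2$ measurements to test whether a candidate column lies in the active range; a Gaussian tail bound against the threshold specified in the algorithm controls each stage's error at $\alpha/(8\log k_2)$, which requires $\mu\gtrsim \tilde{O}(\sigma\sqrt{1/(mk_1)})$. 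Phase~3 repeats the symmetric procedure on rows and imposes $\mu\gtrsim \tilde{O}(\sigma\sqrt{1/(mk_2)})$. The binding constraint is the smaller of $k_1,k_2$, producing the second term of the theorem.

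The main technical obstacle is the budget allocation inside Algorithm~1: the per-measurement signal shrinks geometrically as $s$ decreases (coarser decisions over larger candidate sets), so $m_s$ must grow to compensate while still satisfying $\sum_s m_s \leq m$. The allocation $m_s\asymp ms2^{-s}$ balances these two demands, since $\sum_{s\geq 1} s2^{-s}$ is $\Theta(1)$; combined with the summability of $\delta_s\propto 1/s^2$, this controls the total failure probability at the cost of only polylogarithmic factors in $\log p$ absorbed into $\tilde{O}$. A secondary care point is calibrating the Phase-1 and Phase-2 thresholds in Algorithm~2 using only $k_1,k_2$ (and not $n_1,n_2$); this localization-within-a-window structure is what prevents an $n_1n_2$ factor from contaminating the second term and makes it asymptotically smaller than the approximate-localization term whenever $\min(k_1,k_2)\ll n_1n_2/(k_1^2k_2^2)$.
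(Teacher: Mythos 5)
Your proposal follows essentially the same route as the paper's proof: compressive binary search on the four shifted partitions with the allocation $m_s \asymp m s 2^{-s}$ for approximate localization (yielding the $\sqrt{n_1n_2/(mk_1^2k_2^2)}$ term), then probe columns spaced exactly $k_2$ apart plus a halving search inside the coarse window for exact localization (yielding the $\sqrt{1/(m\min(k_1,k_2))}$ term up to the hidden $\sqrt{\log\max(k_1,k_2)}$ factor), all combined by a union bound; your per-stage choice $\delta_s \propto \alpha/s^2$ is an equivalent variant of the paper's direct geometric summation of the stage-wise error exponents. The one loose point is that you run Algorithm~2 on the $4k_1\times 4k_2$ window ``formed around that block,'' which implicitly assumes you can identify which of the four returned coarse blocks actually contains $B^*$; the paper sidesteps this by collecting the rows and columns of all four returned blocks into a region of size at most $8k_1\times 8k_2$ (hence the probes $1,k_2+1,\dots,7k_2+1$ in its proof) and running the exact-localization phase there, which only affects constants. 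With that small patch your argument coincides with the paper's.
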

As before, the $\tilde{O}$ hides a $\sqrt{\log \max(k_1,k_2)}$ factor,
and our upper bound matches the lower bound up to this factor.  It is
worth noting that for small activation blocks (when the first term
dominates) our active localization procedure achieves the
\emph{detection} limits. This is the best result we could hope
for. For larger activation blocks, the lower bound indicates that
\emph{no} procedure can achieve the detection rate. The active
procedure still remains significantly more efficient than the passive
one, and even in this case is able to localize signals that are weaker
by a (large) $\sqrt{n_1n_2}$ factor. This is not the case for
compressed sensing of vectors as shown in \citet{castrofundamental}.
The great potential for gains from adaptive measurements is clearly
seen in our model which captures the fundamental interplay between
\emph{structure} and \emph{adaptivity}.

\begin{figure}[t]
  \centering
  \includegraphics[width=0.7\columnwidth]{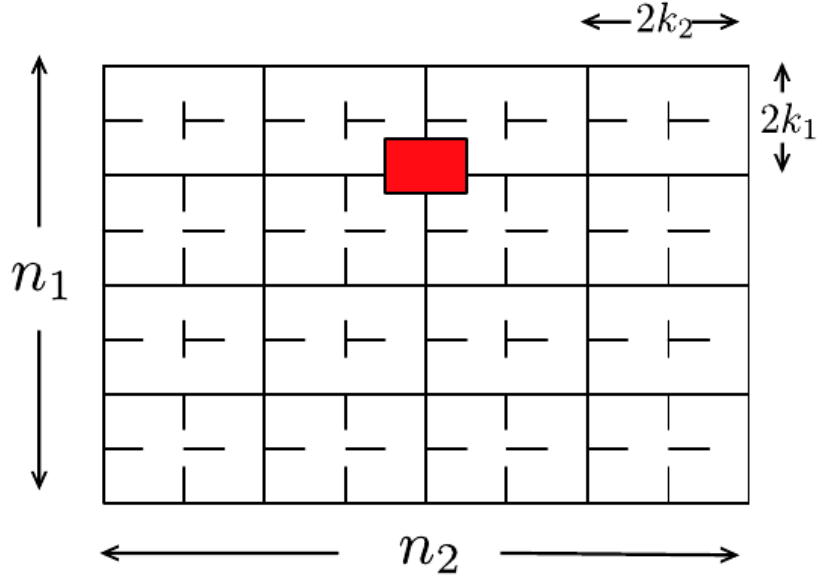}
  \caption{The collection of blocks $\mathcal{D}_1$ is shown 
  in solid lines and the collection $\mathcal{D}_2$ is shown
  in dashed lines. The collections $\mathcal{D}_3$ and $\mathcal{D}_4$
  overlap with these and are not shown.
  The $(k_1 \times k_2)$ block of activation is shown in red.}
  \label{fig:blocks}
\end{figure}

\section{Experiments}
\label{sec:experiments}
In this section, we perform a set of simulation studies to illustrate
finite sample performance of the proposed procedures.  We let $n_1 = n_2
= n$ and $k_1 = k_2 = k$.  Theorem~\ref{thm:passiveub} and
Theorem~\ref{thm:activeub} characterize the SNR needed for the passive
and active identification of a contiguous block, respectively.  We
demonstrate that the scalings predicted by these theorems are sharp by
plotting the probability of successful recovery against appropriately
rescaled SNR and showing that the curves for different values of $n$
and $k$ line up.

{\bf Experiment 1.}  Figure~\ref{fig:passive} shows the probability of
successful localization of $B^*$ using $\hat B$ defined in
Eq.~\eqref{eq:estim_block} plotted against $n^{-1}\sqrt{km}*{\rm
  SNR}$, where the number of measurements $m = 100$. Each plot in
Figure~\ref{fig:passive} represents different relationship between $k$
and $n$; in the first plot, $k = \Theta(\log n)$, in the second $k =
\Theta(\sqrt{n})$, while in the third plot $k = \Theta(n)$. The dashed
vertical line denotes the threshold position for the scaled SNR at
which the probability of success is larger than $0.95$. We observe
that irrespective of the problem size and the relationship between $n$
and $k$, Theorem~\ref{thm:passiveub} tightly characterizes the minimum
SNR needed for successful identification.

\begin{figure}[!h]
  \centering
  \includegraphics[width=\columnwidth]{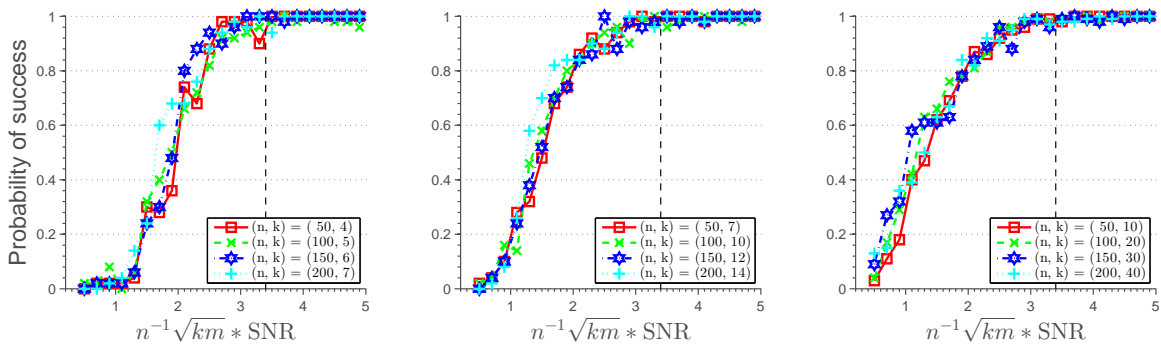}
  \caption{\small Probability of success with passive measurements
    (averaged over 100 simulation runs).}
  \label{fig:passive}
\end{figure}

{\bf Experiment 2.}  Figure~\ref{fig:active} shows the probability of
successful localization of $B^*$ using the procedure outlined in
Section~5.2., with $m = 500$ adaptively chosen measurements, plotted
against the scaled SNR.  The SNR is scaled by $n^{-1}\sqrt{m}k^2$ in
the first two plots where $k = \Theta(\log n)$ and $k =
\Theta(\sqrt{n})$ respectively, while in the third plot the SNR is
scaled by $\sqrt{mk/\log k}$ as $k = \Theta(n)$.  The dashed vertical
line denotes the threshold position for the scaled SNR at which the
probability of success is larger than $0.95$. We observe that
Theorem~\ref{thm:activeub} sharply characterizes the minimum SNR
needed for successful identification.

\begin{figure}[!h]
  \centering
  \includegraphics[width=\columnwidth]{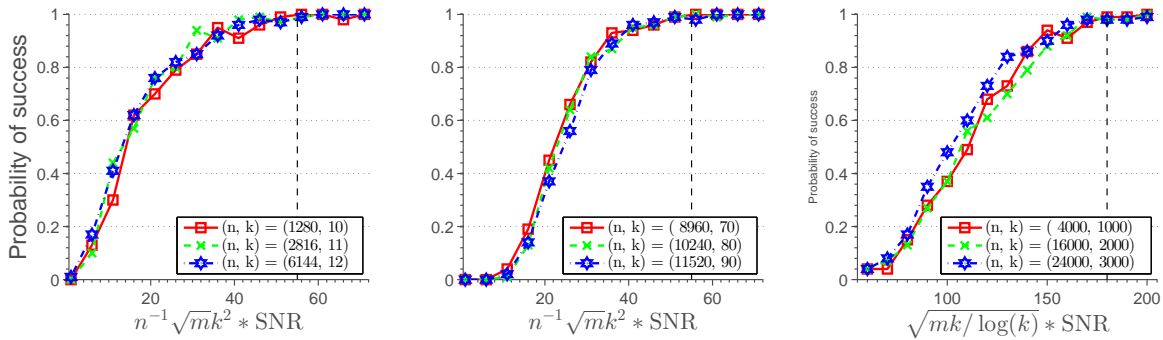}
  \caption{\small Probability of success with adaptively chosen measurements
    (averaged over 100 simulation runs).}
  \label{fig:active}
\end{figure}


\section*{Acknowledgements}

We would like to thank Larry Wasserman for his ideas, indispensable
advice and wise guidance. This research is supported in part by AFOSR
under grant FA9550-10-1-0382, NSF under grant IIS-1116458 and NSF
CAREER grant DMS 1149677.

\bibliography{biblio}

\clearpage
\appendix

\section{Proofs of Main Results}

In this appendix, we collect proofs of the results stated in the
paper. Throughout the proofs, we will denote $c_1, c_2, \ldots$
positive constants that may change their value from line to line.

\subsection{Proof of Theorem \ref{thm:detectionlb}}

We lower bound the Bayes risk of any test $T$. Recall, the null and
alternate hypothesis, defined in Eq.~\eqref{eq:detection_hypothesis},
\begin{equation*}
\begin{array}{cl}
H_0 \colon & A = 0_{n_1 \times n_2} \\
H_1 \colon & A = (a_{ij}) \text{ with } a_{ij} = \mu \ind_{\{(i,j)\in
  B\}},\ B \in \Bcal.
\end{array}  
\end{equation*}

We will consider a uniform prior over the alternatives $\pi$, and 
bound the average risk
\[
R_\pi(T) = \mathbb{P}_0[T = 1] 
  + \mathbb{E}_{A \sim \pi} \mathbb{P}_A[T = 0],
\]
which provides a lower bound on the worst case risk of $T$.

Under the prior $\pi$, the hypothesis testing becomes to distinguish
\begin{equation*}
\begin{array}{cl}
H_0 \colon & A = 0_{n_1 \times n_2} \\
H_1 \colon & A = (a_{ij}) \text{ with } a_{ij} = \EE_{B \sim \pi} \mu \ind_{\{(i,j)\in
  B\}}.
\end{array}  
\end{equation*}

Both $H_0$ and $H_1$ are simple and the likelihood ratio test is
optimal by the Neyman-Pearson lemma. The likelihood ratio is
\[
L \equiv 
\frac{\mathbb{E}_\pi \mathbb{P}_A [(y_i, X_i)_{i \in [m]}]}
     {\mathbb{P}_0 [(y_i, X_i)_{i \in [m]}]}
= \frac{ \mathbb{E}_\pi \prod_{i = 1}^m  \mathbb{P}_A [y_i | X_i] }
   {  \prod_{i = 1}^m  \mathbb{P}_0 [y_i | X_i] },
\]
where the second equality follows by decomposing the probabilities by
the chain rule and observing that $P_0 [X_i | (y_j, X_j)_{j \in
  [i-1]}] = P_A [X_i | (y_j, X_j)_{j \in [i-1]}]$, since the sampling
strategy (whether active or passive) is the same irrespective of the
true hypothesis.

The likelihood ratio can be further simplified as
\[
 L = \mathbb{E}_\pi \exp \left( \sum_{i=1}^m \frac{ 2 y_i
     \mathrm{tr}(AX_i) - \mathrm{tr}(AX_i)^2}{2 \sigma^2} \right).
\]

The average risk of the likelihood ratio test
\[
R_\pi(T) = 1 - \frac{1}{2} 
\norm{\mathbb{E}_\pi \mathbb{P}_A - \mathbb{P}_0}_{TV}
\]
is determined by the total variation distance between the mixture of
alternatives from the null.

By Pinkser's inequality \citet{tsybakov09introduction},
\[
\norm{\mathbb{E}_\pi \mathbb{P}_A - \mathbb{P}_0}_{TV} 
\leq \sqrt{KL(\mathbb{P}_0,\mathbb{E}_\pi \mathbb{P}_A)/2}
\]
and
\begin{align*}
KL(\mathbb{P}_0, \mathbb{E}_\pi \mathbb{P}_A) 
& = - \mathbb{E}_0 \log L \\
& \leq - \mathbb{E}_\pi \sum_{i = 1}^m  
   \mathbb{E}_0  \frac{ 2 y_i \mathrm{tr}(AX_i) - \mathrm{tr}(AX_i)^2}{2 \sigma^2} \\ 
& = \mathbb{E}_\pi \sum_{i=1}^m \mathbb{E}_0 \frac{\mathrm{tr}(AX_i)^2}{2 \sigma^2} \\
& \leq 
\frac{m}{2\sigma^2}
\sup_{\norm{X}_F\leq 1}\mathbb{E}_\pi \mathrm{tr}(AX_i)
:= \frac{m}{2\sigma^2} \norm{C}_{op},
\end{align*}
where the first inequality follows by applying the Jensen's inequality
followed by Fubini's theorem, and the second inequality follows using
the fact that $\norm{X_i}_F^2 = 1$, where $C \in \mathbb{R}^{n_1n_2
  \times n_1n_2}$.

To describe the entries of $C$, consider the invertible map $\tau$
from a linear index in $\{1, \ldots, n_1n_2\}$ to an entry of
$A$. Now, $C_{ii} = \mu^2 \mathbb{E}_\pi P_A[ A_{\tau(i)} = 1] $ and
$C_{ij} = \mu^2 \mathbb{E}_\pi P_A [A_{\tau(i)} = 1, A_{\tau(j)} = 1]$.

To bound the operator norm of $C$ we make two observations. Firstly,
because of the contiguous structure of the activation pattern, in any
row of $C$ there are at most $k_1k_2$ non-zero entries. Secondly, each
non-zero entry in $C$ is of magnitude at most $\mu^2 k_1k_2/(n_1 -
k_1)(n_2-k_2)$.

Now, note that
$$\norm{C}_{op} \leq \max_j \sum_k |C_{jk}| \leq \mu^2 k_1^2k_2^2/(n_1 - k_1)(n_2 - k_2)$$
from which we obtain a bound on the $KL$ divergence.

Now, this gives us that
$$R_\pi(T) \geq 1 - k_1k_2\mu \sqrt{\frac{m}{16(n_1 - k_1)(n_2 - k_2)}}$$
proving the lower bound on the minimax risk.

\subsection{Proof of Theorem \ref{thm:detectionub}}

Define $t = \frac{1}{\sqrt{m}} \sum_{i = 1}^m y_i$. It is easy to see
that under $H_0$, $t \sim \mathcal{N}(0,\sigma^2)$ while under $H_1$,
$t \sim \mathcal{N}( \sqrt{\frac{m}{n_1n_2}} k_1k_2 \mu, \sigma^2)$.
The theorem now follows from an application of standard Gaussian tail
bounds in Eq.~\eqref{eq:tail-bound-normal}. 

\subsection{Proof of Theorem \ref{thm:activelb}}
The proof will proceed via two separate constructions. At a high level
these constructions are intended to capture the difficulty of
exactly and approximately localizing the activation block.

{\bf Construction 1 - approximate localization:} Let us define three
distributions: $\mathbb{P}_0$ corresponding to no bicluster,
$\mathbb{P}_1$ which is a uniform mixture over the distributions
induced by having the top-left corner of the bicluster in the left
half of the matrix and $\mathbb{P}_2$ which is a uniform mixture over
the distributions induced by having the top-left corner of the
bicluster in the right half of the matrix.

We first upper bound the total variation between $\mathbb{P}_1$ and
$\mathbb{P}_2$.  This results directly in a lower bound for the
problem of distinguishing whether the top-left corner of the bicluster
is in the left or right half of the matrix, which in turn is a lower
bound for the localization of the bicluster.

Now notice that,
\begin{eqnarray*}
||\mathbb{P}_1 - \mathbb{P}_2||_{TV}^2 & \leq & 2||\mathbb{P}_0 - \mathbb{P}_1||_{TV}^2 + 2 ||\mathbb{P}_0 - \mathbb{P}_2||_{TV}^2 \\
& \leq & KL(\mathbb{P}_0,\mathbb{P}_1) + KL(\mathbb{P}_0, \mathbb{P}_2) \\
\end{eqnarray*}
Notice that $KL(\mathbb{P}_0,\mathbb{P}_1)$ is exactly the quantity we
have to upper bound to produce a lower bound on the signal strength
for detecting whether a block of activation is in the left half of the
matrix or not.  At least from a lower bound perspective this reduces
the problem of localization to that of detection. We can now apply a
slight modification of the proof of Theorem \ref{thm:detectionlb} to
obtain that
\begin{eqnarray*}
KL(\mathbb{P}_0,\mathbb{P}_1) = KL(\mathbb{P}_0,\mathbb{P}_2 ) \leq
\frac{m \mu^2 k_1^2 k_2^2}{(n_1 - k_1) (n_2/2 - k_2)} \\
\end{eqnarray*}

Noting that the minimax risk $R$ for distinguishing $\mathbb{P}_1$ from $\mathbb{P}_2$
\begin{eqnarray*}
R = 1 - \frac{1}{2} ||\mathbb{P}_1 - \mathbb{P}_2||_{TV} \geq 1 -
\sqrt{\frac{m \mu^2 k_1^2 k_2^2}{2(n_1 - k_1) (n_2/2 - k_2)}}
\end{eqnarray*}

{\bf Construction 2 - exact localization: } Without loss of generality
we assume $k_1 \leq k_2$.  Consider, two distributions $\mathbb{P}_1$
and $\mathbb{P}_2$, where $\mathbb{P}_1$ is induced by matrix $A_1$
when the activation block $B = B_1 = [1,\ldots,k_1][1,\ldots,k_2]$ and
$\mathbb{P}_2$ is induced by matrix $A_2$ when the activation block $B
= B_2 = [1,\ldots,k_1][2,\ldots,k_2+1]$.

Now, following the same argument as in the proof of Theorem
\ref{thm:detectionlb}, we have
\begin{eqnarray*}
KL(\mathbb{P}_1,\mathbb{P}_2) & = & \mathbb{E}_{\mathbb{P}_1} \sum_{i = 1}^m \left( - \frac{1}{2\sigma^2} \left[ (y_i - \mathrm{tr}(A_1X_i))^2 - (y_i - \mathrm{tr}(A_2X_i))^2  \right]\right) \\
& = & \frac{1}{2\sigma^2} \mathbb{E}_{\mathbb{P}_1} \sum_{i = 1}^m \left[ \mathrm{tr}(A_2X_i)^2 - \mathrm{tr}(A_1X_i)^2 + 2 y_i \mathrm{tr}(A_1X_i) - 2 y_i \mathrm{tr}(A_2X_i) \right] \\
& = & \frac{1}{2\sigma^2} \mathbb{E}_{\mathbb{P}_1} \sum_{i = 1}^m\left(  \underbrace{\mathrm{tr}(A_2X_i) - \mathrm{tr}(A_1X_i) }_{t_i}\right)^2  =  \frac{1}{2\sigma^2} \mathbb{E}_{\mathbb{P}_1} \sum_{i = 1}^m t_i^2 \\
\end{eqnarray*}

Now, with some abuse of notation,
\begin{eqnarray*}
t_i & = & \mu \left( \sum_{j \in B_1\backslash B_2} X_{ij} - \sum_{j
    \in B_2 \backslash B_1} X_{ij}  \right) \\
& \leq & \mu \left( \sum_{j \in B_1 \Delta B_2} |X_{ij}| \right) \\
\end{eqnarray*}
By using Cauchy-Schwarz we get
\begin{eqnarray*}
t_i^2 \leq 2\mu^2 k_1 \sum_{j \in B_1 \Delta B_2} X^2_{ij} \leq 2\mu^2 k_1
\end{eqnarray*}
since $||X_i||_F^2 = 1$.

This gives us that,
\begin{eqnarray*}
KL(\mathbb{P}_1,\mathbb{P}_2) \leq \frac{m k_1 \mu^2}{\sigma^2}
\end{eqnarray*}
Together with a similar construction for the case when $k_2 \leq k_1$ we get
$$KL(\mathbb{P}_1,\mathbb{P}_2) \leq \frac{m \min(k_1,k_2) \mu^2}{\sigma^2}$$

Once again noting (by Pinsker's theorem),
\begin{eqnarray*}
R \geq 1 - \sqrt{KL(\mathbb{P}_1,\mathbb{P}_2)/8} \geq 1 -  \sqrt{\frac{m \min(k_1,k_2) \mu^2}{8\sigma^2}}
\end{eqnarray*}

Combining the approximate and exact localization bounds we get,
$$R \geq \max\left( 1 -  \sqrt{\frac{m \min(k_1,k_2) \mu^2}{8\sigma^2}}, 1 - \sqrt{\frac{m \mu^2 k_1^2 k_2^2}{2(n_1 - k_1) (n_2/2 - k_2)}} \right) $$

Thus, we get for any $0 < \alpha < 1$, $R \geq \alpha$ if $$\min\left( \sqrt{\frac{m \min(k_1,k_2) \mu^2}{8\sigma^2}},\sqrt{\frac{m \mu^2 k_1^2 k_2^2}{2(n_1 - k_1) (n_2/2 - k_2)}} \right) \leq 1 - \alpha$$

\subsection{Proof of Theorem \ref{thm:passivelb} }

Without loss of generality we assume $k_1 \leq k_2$.  Consider, two
distributions $\mathbb{P}_1$ and $\mathbb{P}_2$, where $\mathbb{P}_1$
is induced by matrix $A_1$ when the activation block $B = B_1 =
[1,\ldots,k_1]\times[1,\ldots,k_2]$ and $\mathbb{P}_2$ is induced by matrix
$A_2$ when the activation block $B = B_2 =
[1,\ldots,k_1]\times[2,\ldots,k_2+1]$.

Following the proof of Theorem \ref{thm:activelb}.
\begin{equation}
\label{eq:kl:1}
  \begin{aligned}
\mathrm{KL}(\mathbb{P}_1, \mathbb{P}_2) 
& =  \mathbb{E}_{\mathbb{P}_1} \log \frac{\mathbb{P}_1}{\mathbb{P}_2} \\
& = \frac{1}{2\sigma^2} \mathbb{E}_{\mathbb{P}_1} \sum_{i=1}^m \left(\mathrm{tr}(A_2X_i) - \mathrm{tr}(A_1X_i) \right)^2 \\
& = \frac{\mu^2}{\sigma^2} \frac{mk_1}{n_1n_2}
\end{aligned}
\end{equation}
using the fact that $X_i$ is a random Gaussian matrix with independent
entries of variance $\frac{1}{n_1n_2}$.

Now, note that the minimax risk
$$R \geq 1 - \sqrt{\mathrm{KL}(\mathbb{P}_1, \mathbb{P}_2)/8}.$$

For the second part of the theorem, we consider $\mathbb{P}_2, \ldots,
\mathbb{P}_{t+1}$, where $t = (n_1 - k_1)(n_2 - k_2)$, each of which
is induced by a $B$ which does not overlap with $B_1$.

The same calculation now gives
\begin{equation}
\label{eq:kl:2}
\mathrm{KL}(\mathbb{P}_1, \mathbb{P}_j) \leq \frac{\mu^2}{\sigma^2} \frac{mk_1k_2}{n_1n_2}
\end{equation}
Now, applying the multiple hypothesis version of Fano's inequality
\citep[see Theorem 2.5 in][]{tsybakov09introduction} we conclude the proof.

\subsection{Proof of Theorem~\ref{thm:passiveub}}

Let $z_{i,B} = \sum_{(a,b) \in B} X_{i,ab}$ and $\zb_B = (z_{1,B},
\ldots, z_{m,B})'$. With this, we can write the loss function defined
in Eq.~\eqref{eq:score_function} as
\begin{equation}
  \label{eq:score_function:1}
  f(B) := \min_{\hat\mu_B} \norm{\hat\mu_B \zb_B - \yb}_2^2.
\end{equation}
Let $\Delta(B) = f(B) - f(B^*)$ and observe that an error is made if
$\Delta(B) < 0$ for $B \neq B^*$. Therefore,
\[
\PP[\text{error}] = \PP[\cup_{B \in \Bcal\bks B^*} \{\Delta(B) < 0\}].
\]
Under the conditions of the theorem, we will show that $\Delta(B) > 0$
for all $B \in \Bcal\bks B^*$ with large probability.

The following lemma shows that for any fixed $B$, the event $\{
\Delta(B) < 0 \}$ occurs with exponentially small probability. 
\begin{lemma}
  \label{lem:passive:bound_delta}
  Fix any $B \in \Bcal\bks B^*$. Then
  \begin{equation}
    \label{eq:passive:bound_delta}
    \PP[\Delta(B) < 0] \leq 
\exp\left(
-c_1
\frac{\mu^2m|B^*\bks B|}{\sigma^2n_1n_2}
\right)  +
 c_2 \exp(-c_3m).
  \end{equation}
\end{lemma}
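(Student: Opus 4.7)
The plan is to bound $\PP[\Delta(B)<0]$ by factoring $\Delta(B)$ as a difference of squares and analyzing the two factors as conditionally Gaussian random variables. Since $\yb = \mu\zb_{B^*} + \boldsymbol{\epsilon}$ with $\boldsymbol{\epsilon}\sim\Ncal(0,\sigma^2 I_m)$, the univariate least-squares minimization in \eqref{eq:score_function:1} has closed form $f(C) = \|\yb\|_2^2 - \langle\zb_C,\yb\rangle^2/\|\zb_C\|_2^2$, so
\[
\Delta(B) = W_{B^*}^2 - W_B^2 = (W_{B^*}-W_B)(W_{B^*}+W_B),
\qquad W_C := \frac{\langle \zb_C, \yb\rangle}{\|\zb_C\|_2}.
\]
The event $\{\Delta(B)<0\}$ forces the two factors to have opposite signs, so by a union bound $\PP[\Delta(B)<0] \leq \PP[W_{B^*}-W_B<0] + \PP[W_{B^*}+W_B<0]$. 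Conditional on $(X_i)_{i\in[m]}$, the pair $(W_{B^*},W_B)$ is jointly Gaussian with means $\xi_{B^*} := \mu\|\zb_{B^*}\|_2$ and $\xi_B := \mu\langle\zb_B,\zb_{B^*}\rangle/\|\zb_B\|_2$, and common marginal variance $\sigma^2$. I would then reduce everything to the three scalars $a := \|\zb_{B^*}\|_2^2$, $b := \|\zb_B\|_2^2$ and $c := \langle\zb_B,\zb_{B^*}\rangle$.

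\textbf{Typical event and signal gap.} Writing $\mu_0 := k_1k_2/(n_1n_2)$, $a/\mu_0$ and $b/\mu_0$ are $\chi^2_m$-distributed and $c$ is a Gaussian chaos with $\EE c = m|B\cap B^*|/(n_1n_2)$. Standard $\chi^2$ and Hanson--Wright concentration give an event
\[
\mathcal{E}\ :\ \ a,b \in [\tfrac12 m\mu_0,\ 2 m\mu_0], \quad |c - \EE c| \leq \tfrac18 m\mu_0,
\]
holding with probability at least $1-c_2\exp(-c_3m)$, which contributes the second term in the lemma. On $\mathcal{E}$,
\[
\xi_{B^*}^2-\xi_B^2 = \mu^2\frac{ab-c^2}{b}, \qquad (\xi_{B^*}-\xi_B)^2 = \mu^2\frac{(\sqrt{ab}-c)^2}{b},
\]
and combining the factorization $(k_1k_2)^2 - |B\cap B^*|^2 = |B^*\setminus B|(k_1k_2+|B\cap B^*|)$ with the $\mathcal{E}$-estimates gives $\xi_{B^*}^2-\xi_B^2 \asymp \mu^2 m|B^*\setminus B|/(n_1n_2)$ and $\xi_{B^*}+\xi_B \gtrsim \mu\sqrt{m\mu_0}>0$.

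\textbf{Gaussian tails.} The conditional variance of $W_{B^*}-W_B$ is $2\sigma^2(1-c/\sqrt{ab})$, and a short algebraic manipulation yields the key identity
\[
\frac{(\xi_{B^*}-\xi_B)^2}{2\sigma^2(1-c/\sqrt{ab})} = \frac{\mu^2(\sqrt{ab}-c)\sqrt{ab}}{2\sigma^2 b} \gtrsim \frac{\mu^2 m|B^*\setminus B|}{\sigma^2 n_1n_2}
\]
on $\mathcal{E}$. A one-sided Gaussian tail bound then gives $\PP[W_{B^*}-W_B<0\mid\mathcal{E}] \leq \exp(-c_1\mu^2 m|B^*\setminus B|/(\sigma^2 n_1n_2))$, the first term of the lemma. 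Similarly, $W_{B^*}+W_B$ is conditionally Gaussian with mean $\gtrsim\mu\sqrt{m\mu_0}$ and variance at most $4\sigma^2$, so $\PP[W_{B^*}+W_B<0\mid\mathcal{E}] \leq \exp(-c\mu^2 m k_1k_2/(\sigma^2 n_1n_2))$, which is no larger than the first bound because $|B^*\setminus B|\leq k_1k_2$. Combining the two Gaussian bounds with $\PP[\mathcal{E}^c]$ closes the proof.

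\textbf{Main obstacle.} The subtle point is obtaining the correct dependence on $|B^*\setminus B|$. A naive analysis treating $W_{B^*}$ and $W_B$ as (essentially) independent $\Ncal(\xi,\sigma^2)$ variables produces only an exponent of $\mu^2 m|B^*\setminus B|^2/(\sigma^2 n_1n_2 k_1k_2)$, which is too weak whenever $|B^*\setminus B|<k_1k_2$ (e.g.\ a one-column shift of $B^*$). The sharper bound exploits the fact that the Gaussian pieces of $W_{B^*}$ and $W_B$ are strongly positively correlated through $c/\sqrt{ab}$: the variance of the \emph{difference} shrinks like $\sigma^2|B^*\setminus B|/(k_1k_2)$ in exact lock-step with the signal gap, and this cancellation is what restores the $\exp(-c\mu^2 m|B^*\setminus B|/(\sigma^2 n_1n_2))$ rate. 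Carefully tracking this cancellation on the high-probability event $\mathcal{E}$ is the bulk of the work.
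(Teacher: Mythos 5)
Your overall framing is sound: the factorization $\Delta(B)=(W_{B^*}-W_B)(W_{B^*}+W_B)$, the conditional joint Gaussianity given the sensing matrices, and the algebraic identity reducing everything to a lower bound on $\sqrt{ab}-c$ are all correct, and since $ab-c^2=b\,\|H_B^\perp \zb_{B^*}\|_2^2$ your key quantity is exactly the one the paper controls (its proof splits $\Delta(B)$ into a difference of $\chi^2_{m-1}$ terms plus a conditionally Gaussian term, but likewise reduces to lower bounding $\|H_B^\perp\zb_{B^*}\|_2^2$ by $c\,m|B^*\setminus B|/(n_1n_2)$). The genuine gap is in your concentration event $\mathcal{E}$. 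You control $a$, $b$, $c$ only to constant relative accuracy, i.e.\ with tolerance of order $m\mu_0=mk_1k_2/(n_1n_2)$, but the cancellation you need to certify lives at the much finer scale $m|B^*\setminus B|/(n_1n_2)=m\mu_0\cdot|B^*\setminus B|/(k_1k_2)$, which in the hard case you yourself flag (a one-column shift, $|B^*\setminus B|=k_1$) is smaller by a factor $1/k_2$. On your $\mathcal{E}$ the individual fluctuations of $a$, $b$, $c$ (each allowed to be as large as $m\mu_0/8$) completely swamp this target: nothing in $\mathcal{E}$ prevents $ab-c^2$ from being far below $m^2\mu_0\,|B^*\setminus B|/(n_1n_2)$ (indeed the crude bounds $ab\ge m^2\mu_0^2/4$, $c\le \mathbb{E}c+m\mu_0/8$ do not even keep $ab-c^2$ of the right order when $|B\cap B^*|$ is close to $k_1k_2$), so the claimed estimates $\xi_{B^*}^2-\xi_B^2\asymp \mu^2 m|B^*\setminus B|/(n_1n_2)$ and the key ratio bound do not follow. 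Tightening the tolerances on $a$, $b$, $c$ to the needed scale would force the $\chi^2$/Hanson--Wright failure probabilities up from $\exp(-c_3m)$ to roughly $\exp(-c_3 m/\max(k_1,k_2)^2)$, which destroys the second term of the lemma.

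The repair is the one the paper implements: do not concentrate $a$, $b$, $c$ separately, but first decompose $\zb_{B^*}=\zb_B+\zb_{B^*\setminus B}-\zb_{B\setminus B^*}$, so that $H_B^\perp\zb_{B^*}=H_B^\perp(\zb_{B^*\setminus B}-\zb_{B\setminus B^*})$ involves only vectors supported on the symmetric difference (independent of the overlap piece). Then every random quantity you need --- $\|\zb_{B^*\setminus B}\|_2^2$, $\|\zb_{B\setminus B^*}-\zb_{B^*\setminus B}\|_2^2$, $\|H_B^\perp\zb_{B^*\setminus B}\|_2^2$, and the cross term with $\zb_B$ --- has mean already of order $m|B\Delta B^*|/(n_1n_2)$, so constant relative accuracy (hence failure probability $c_2\exp(-c_3m)$) suffices to conclude $\|H_B^\perp\zb_{B^*}\|_2^2\gtrsim m|B^*\setminus B|/(n_1n_2)$, after which your Gaussian tail step for $W_{B^*}-W_B$ (and the easy bound for $W_{B^*}+W_B$) goes through as you describe.
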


From the second term in Eq.~\eqref{eq:passive:bound_delta}, we obtain
a lower bound on the sample size $m$. Using the union bound, it is
sufficient that $m$ satisfies 
\[
c_1(n_1-k_1)(n_2-k_2)\exp(-c_2m) \leq \delta/2,
\]
which gives us the lower bound as $m \geq C \log \max(n_1 - k_1, n_2 -
k_2)$.

Define $N(l) = |\{B \in \Bcal \ :\ |B \Delta B^*| = l\}|$ to be the
number of elements in $\Bcal$ whose symmetric difference with
$B^*$ is equal to $l$. Note that $N(l) = \Ocal(1)$ for any $l$. Using
the union bound
\begin{equation}
\label{eq:proof:passive_loc:union_bound}
\begin{aligned}
  & \PP[\cup_{B \in \Bcal} \{\Delta(B) < 0\}] \\
  & \leq
    \sum_{B \in \Bcal, |\bdb| = 2k_1k_2} 
      \exp\left(
        -c_1\frac{\mu^2k_1k_2m}{\sigma^2{n_1n_2}}
      \right) +
    \sum_{l < 2k_1k_2}
    N(l)
      \exp\left(
        -c_1\frac{\mu^2lm}{\sigma^2{n_1n_2}}
      \right)
     \\
  & \leq c_2(n_1-k_1)(n_2-k_2)
      \exp\left(
        -c_1\frac{\mu^2k_1k_2m}{\sigma^2{n_1n_2}}
      \right) +
    c_3k_1k_2
      \exp\left(
        -c_1\frac{\mu^2\min(k_1,k_2)m}{\sigma^2{n_1n_2}}
      \right).
\end{aligned}
\end{equation}
Choosing 
\[
\mu = c_1 \sigma \sqrt{\frac{n_1n_2}{m} \log(2/\delta)
  \max\rbr{\frac{\log\max(k_1,k_2)}{\min(k_1,k_2)}, 
       \frac{\log\max(n_1-k_1,n_2-k_2)}{k_1k_2}}
}
\]
each term in Eq.~\eqref{eq:proof:passive_loc:union_bound} will be
smaller than $\delta/2$, with an appropriately chosen constant $c_1$.

We finish the proof of the theorem, by proving
Lemma~\ref{lem:passive:bound_delta}.

\begin{proof}[Proof of Lemma~\ref{lem:passive:bound_delta}]
For any $B \in \Bcal$, let
\[
\begin{aligned}
\hat\mu_B 
&= \argmin_{\hat\mu_B} \norm{\hat\mu_B \zb_B - \yb}_2^2\\
&= \norm{\zb_B}_2^{-2}\zb_B'\yb.
\end{aligned}
\]
Note that $ \hat{\mu}_{B^*} = \mu + \norm{\zb_{B^*}}_2^{-2}\zb_{B^*}'\epsilonb$.

Let 
\begin{equation*}
  \begin{aligned}
    \Hb_B& = \norm{\zb_B}_2^{-2}\zb_B\zb_B'  \\
    \Hb_B^{\perp} &= \Ib - \norm{\zb_B}_2^{-2}\zb_B\zb_B'
  \end{aligned}
\end{equation*}
be the projection matrices and write
\[
\begin{aligned}
f(B^*) &= \norm{\Hb_{B^*}^\perp\epsilonb}_2^2 \\
f(B) &= \norm{\Hb_B^\perp(\zb_{B^*} \mu + \epsilonb)}_2^2 
= \norm{\Hb_{B}^\perp\epsilonb}_2^2 +
\mu^2\norm{\Hb_B^\perp\zb_{B^*}}_2^2
+2\epsilonb'\Hb_B^\perp\zb_{B^*} \mu.
\end{aligned}
\]
Now,
\[
\Deltab(B)  = \underbrace{\norm{\Hb_{B}^\perp\epsilon}_2^2 -
\norm{\Hb_{B^*}^\perp\epsilon}_2^2}_{T_1} + 
\underbrace{\mu^2\norm{\Hb_B^\perp\zb_{B^*}}_2^2
+2\epsilonb'\Hb_B^\perp\zb_{B^*} \mu}_{T_2}.
\]
Conditional on $\Xb$, $\norm{\Hb_{B}^\perp\epsilon}_2^2 \mid \Xb \sim
\sigma^2\chi^2_{m-1}$ and $\norm{\Hb_{B^*}^\perp\epsilon}_2^2 \mid \Xb
\sim \sigma^2\chi^2_{m-1}$ \citep[see Theorem 3.4.4
in][]{mardia1980multivariate} . Since the conditional distributions do
not depend on $\Xb$, they are the same as the marginal
distributions. Therefore, $T_1 \sim \sigma^2(V_1-V_2)$ where $V_1, V_2
\sim \chi^2_{m-1}$.
\begin{align}
\label{eq:proof:bound_t1}
  \PP\left[|T_1| \geq \frac{\sigma^2(m-1)\eta}{2}\right] 
  \leq
  2\PP\left[|\chi_{m-1}^2 - m + 1|\geq
  \frac{(m-1)\eta}{4}\right] 
  \leq 2\exp\left(-\frac{3(m-1)\eta^2}{256}\right)
\end{align}
using Eq.~\eqref{eq:chi-central-upper-johnstone}, as long as $\eta
\in [0, 2)$.

To analyze the term $T_2$, we condition on $\Xb$, so that 
\begin{equation*}
  T_2|\Xb \sim \Ncal(\tilde\mu, 4\sigma^2\tilde\mu)
\end{equation*}
where $\tilde\mu = \mu^2\norm{\Hb_B^\perp\zb_{B^*}}_2^2$. This
gives
\[
\PP[T_2  \leq \tilde\mu/2| \Xb] = 
\PP[\Ncal(0,1) \geq \sqrt{\tilde\mu}/(4\sigma) | \Xb].
\]
Next, we show how to control $\norm{\Hb_B^\perp\zb_{B^*}}_2^2$.
Writing $\zb_{B^*} = \zb_B - \zb_{B \bks B^*} + \zb_{B^* \bks B}$,
simple algebra gives
\begin{equation*}
\begin{aligned}
&\norm{\Hb_B^\perp\zb_{B^*}}_2^2 \\
&= 
\norm{\Hb_B^\perp\zb_{B^* \bks B}}_2^2 +
\norm{\Hb_B^\perp\zb_{B \bks B^*}}_2^2 -
2\zb_{B^* \bks B}'\Hb_B^\perp\zb_{B \bks B^*} \\
& = \norm{\Hb_B^\perp\zb_{B^* \bks B}}_2^2 +
\norm{\zb_{B\bks B^*}-\zb_{B^* \bks B}}_2^2 -
\norm{\zb_{B^* \bks B}}_2^2 -
\frac{
((\zb_{B\bks B^*}-\zb_{B^* \bks B})'\zb_B)^2 - 
(\zb_{B^* \bks B}'\zb_B)^2
}{\norm{\zb_{B}}_2^2}\\
& \geq \norm{\Hb_B^\perp\zb_{B^* \bks B}}_2^2 +
\norm{\zb_{B\bks B^*}-\zb_{B^* \bks B}}_2^2 -
\norm{\zb_{B^* \bks B}}_2^2 -
\frac{
((\zb_{B\bks B^*}-\zb_{B^* \bks B})'\zb_B)^2
}{\norm{\zb_{B}}_2^2}.
\end{aligned}
\end{equation*}
Define the event 
\begin{equation*}
\begin{aligned}
\Ecal(\eta) = & 
\left\{ 
\norm{\Hb_B^\perp\zb_{B^* \bks B}}_2^2
\geq \frac{(1-\eta)(m-1)|B^*\bks B|}{n_1n_2}
\right\}
\bigcap
\left\{ 
\norm{\zb_{B\bks B^*}-\zb_{B^* \bks B}}_2^2
\geq \frac{(1-\eta)2m|B^*\bks B|}{n_1n_2}
\right\} \\
& \bigcap
\left\{ 
\norm{\zb_{B^* \bks B}}_2^2
\leq \frac{(1+\eta)m|B^*\bks B|}{n_1n_2}
\right\} 
\bigcap
\left\{ 
\norm{\zb_{B}}_2^2
\geq \frac{(1-\eta)m|B|}{n_1n_2}
\right\} \\
&\bigcap
\left\{ 
|(\zb_{B\bks B^*}-\zb_{B^* \bks B})'\zb_B|
\leq \frac{(1+\eta)m|B^*\bks B|}{n_1n_2}
\right\},
\end{aligned}
\end{equation*}
such that, using the concentration results in Appendix B, 
\[
\PP[\Ecal(\eta)^C] \leq c_1 \exp(-c_2m\eta^2).
\]
On the event $\Ecal(\eta)$ we have that 
\begin{equation*}
\begin{aligned}
\norm{\Hb_B^\perp\zb_{B^*}}_2^2 &
\geq
\frac{m|B^*\bks B|}{n_1n_2}
\left[3(1-\eta)-(1+\eta)-
\frac{(1+\eta)^2}{1-\eta}\frac{|B^*\bks B|}{|B|}\right]
-\frac{(1-\eta)|B^*\bks B|}{n_1n_2}\\
& \geq
c_1
\frac{m|B^*\bks B|}{n_1n_2}.
\end{aligned}
\end{equation*}
Therefore,
\begin{equation}
\label{eq:proof:bound_t2}
  \begin{aligned}
\PP[T_2 \leq \tilde\mu/2|\Xb] & \leq
\PP\left[\Ncal(0,1)\geq c_1\frac{\mu}{\sigma}
\sqrt{
\frac{m|B^*\bks B|}{n_1n_2}}\right]
+ \PP[\Ecal^C]  \\
& \leq
\exp\left(
-c_1
\frac{\mu^2m|B^*\bks B|}{\sigma^2n_1n_2}
\right)  +
 c_2 \exp(-c_3m\eta^2).
  \end{aligned}
\end{equation}
Combining Eq.~\eqref{eq:proof:bound_t1} and
Eq.~\eqref{eq:proof:bound_t2} completes the proof.

\end{proof}

\subsection{Proof of Theorem \ref{thm:activeub}}
As with the lower bound the localization algorithm and analysis is
naturally divided into two phases. An approximate localization phase
and an exact localization one. We will analyze each of these in
turn. To ease presentation we will assume $n_1$ is a dyadic multiple
of $2k_1$ and $n_2$ a dyadic multiple of $2k_2$.  Straightforward
modifications are possible when this is not the case.

{\bf Approximate localization: } The approximate localization phase
proceeds by a modification of the compressive binary search (CBS)
procedure of \citet{nowakcbs} (see also \citet{davenportcbs}) on the
matrix $A$.

We will run this modified CBS procedure four times on sets of
blocks of the matrix $A$. The four sets are
\begin{eqnarray*}
\mathcal{D}_1 & \equiv & \left\{ B_{1,1} := [1,\ldots,2k_1] \times [1,\ldots,2k_2]
  , B_{1,2} := [2k_1+1,\ldots,4k_1] \right. 
 \times [1,\ldots,2k_2]  \\
 & &
\left. \ldots , B_{1,n_1n_2/4k_1k_2}
:= [n_1 - 2k_1, \ldots,n_1] \times [n_2-2k_2,\ldots,n_2] \right\} \\
\mathcal{D}_2 & \equiv & \left\{ B_{2,1} :=
  [k_1,\ldots,3k_1] \times [k_2,\ldots,3k_2] , B_{2,2} :=
  [3k_1+1,\ldots,5k_1] \times [k_2,\ldots,3k_2] \right. \\
 & & \left.  \ldots , B_{2,n_1n_2/4k_1k_2} :=
  [n_1-k_1,...,n_1,1,\ldots,k_1] \times [n_2
  -k_2,...,n_2,1,\ldots,k_2]\right\} \\
\mathcal{D}_3 & \equiv & \left\{ B_{3,1} :=
  [k_1,\ldots,3k_1] \times [1,\ldots,2k_2] , B_{3,2} :=
  [3k_1+1,\ldots,5k_1] \times [1,\ldots,2k_2] \right. \\
 & & \left.  \ldots , B_{3,n_1n_2/4k_1k_2} :=
  [n_1-k_1,...,n_1,1,\ldots,k_1] \times [n_2-2k_2,\ldots,n_2]\right\} 
  \end{eqnarray*}
  and
  \begin{eqnarray*}
  \mathcal{D}_4 & \equiv & \left\{ B_{4,1} :=
  [1,\ldots,2k_1] \times [k_2,\ldots,3k_2] , B_{4,2} :=
  [2k_1+1,\ldots,4k_1] \times [k_2,\ldots,3k_2] \right. \\
 & & \left.  \ldots , B_{4,n_1n_2/4k_1k_2} :=
  [n_1 - 2k_1, \ldots,n_1] \times [n_2
  -k_2,...,n_2,1,\ldots,k_2]\right\}.
\end{eqnarray*}
Notice that the entire block of activation is always \emph{fully}
contained in one of these blocks.  The output of the CBS procedure
when run on these four collections is four blocks - one from each
collection. We define an
approximate localization \emph{error} to be the event in which none
of the blocks returned fully contains the block of activation.

Without loss of generality let us assume that the activation block is
fully contained in some block from the first collection. Once we have
fixed the collection of blocks the CBS procedure is invariant to
reordering of the blocks, so without loss of generality we can
consider the case when the activation block is contained in $B_{11}$.

The analysis proceeds exactly as in \cite{nowakcbs}. We only outline
the differences arising from having a block of activation as opposed to a
single activation in a vector, and refer the reader to \cite{nowakcbs} 
for the details. 

The binary search
procedure on the first collection of blocks proceeds for $$s_0 \equiv
\log \left( \frac{n_1n_2}{4k_1k_2} \right)$$ rounds.  Now, we can
bound the probability of error of the procedure by a union bound as
$$\mathbb{P}_e \leq \sum_{s = 1}^{s_0} P[w^{s} < 0]$$
where $$w^s \sim \mathcal{N}\left(\frac{m_s 2^{(s-1)/2} k_1 k_2 \mu}{\sqrt{n_1n_2}}, m_s \sigma^2\right)$$

Recall, the allocation scheme: for $m \geq 2 s_0$,
$m_s \equiv \lfloor (m - s_0) s 2^{-s - 1}\rfloor + 1$
and observe that
$\sum_{s = 1}^{s_0} m_s \leq m$ 

Now, using the Gaussian tail bound 
$$P[N(0,1) > t] \leq \frac{1}{2} \exp (-t^2/2)$$ 
we see that
$$\mathbb{P}_e \leq \frac{1}{2} \sum_{s = 1}^{s_0} \exp \left( -
\frac{m_s 2^{s} k_1^2 k_2^2 \mu^2}{4n_1n_2 \sigma^2}
\right)$$

Now, observe that $m_s \geq (m - s_0) s 2^{-s - 1}$ and $m \geq 2s_0$, so 
$m_s \geq ms2^{-s-2}$.

It is now straightforward to verify that if
$$\mu \geq \sqrt{ \frac{16\sigma^2n_1n_2}{mk_1^2k_2^2} \log \left( \frac{1}{2\delta} + 1 \right)}$$
we have $\mathbb{P}_e \leq \delta$. We apply this procedure 4 times (once on each collection).

Let us revisit what we have shown so far: if $\mu$ is large enough then one of the four
runs of the CBS procedure will return a block of size $(2k_1 \times 2k_2)$ which
fully contains the block of activation, with probability at least $1 - 4\delta$.

{\bf Exact localization: } 
We collect all the rows and columns returned by the 4 runs of the CBS procedure.
In the $1- 4\delta$ probability event described above,
we have a block of at most $(8k_1 \times 8k_2)$ which contains the full block
of activation (for simplicity we disregard the fact that we know that the block
is actually in one of two $(4k_1 \times 4k_2)$ blocks, i.e. we assume the worst case
that none of the returned blocks overlap in their rows or columns and we explore the off-diagonal
blocks).

Let us first identify the active columns. First, notice that exactly one of the
following columns: $\{1, k_2 + 1, 2k_2 + 1,\ldots, 7k_2 + 1\}$ must
be active.

Let us devote $8m$ measurements to identifying the active column
amongst these. The procedure is straightforward: measure each column
$m$ times, and pick the one that has the largest total signal.

It is easy to show that the active column results in a draw from
$\mathcal{N}(\sqrt{\frac{k_1}{8}} \mu m, m\sigma^2)$ and the non-active columns
result in draws from $\mathcal{N}(0, m\sigma^2)$.

Using the same Gaussian tail bound as before it is easy to show that if
$$\mu \geq \sqrt{ \frac{64 \sigma^2}{k_1m} \log (4/\delta)}$$
we successfully find the active column with probability at least $1 - \delta$.

So far, we have identified an active column and localized the 
columns of the activation block to one of $2k_2$ columns. We will use
$m$ more measurements to find the remaining active columns. Rather,
than test each of the $2k_2$ columns we will do a binary search. This will
require us to test at most $t \equiv 2 \lceil \log k_2 \rceil \leq 3\log k_2$ 
columns, and we will devote $m/ (3\log k_2)$ measurements to each
column. We will need to threshold these
measurements at 
$$ \sqrt{ \log \left( \frac{3 \log k_2}{\delta} \right) \frac{2m\sigma^2}{3 \log k_2}} $$
and declare a row as active if its average is larger than this.

It is easy to show that this binary search procedure successfully finds
all active columns with
probability at least $1 - \delta$ if
$$\mu \geq \sqrt { \frac{32 \sigma^2 \log k_2}{mk_1} \log \left( \frac{3 \log k_2}{  \delta} \right)} $$

We repeat this procedure to identify the active rows.

{\bf Putting everything together: }
Total number of measurements used:
\begin{enumerate}
\item Four rounds of CBS: $4m$
\item Identifying first active column and first active row: $16m$
\item Identifying remaining active rows and columns: $2m$
\end{enumerate}
This is a total of $22m$ measurements.
Each of these steps fails with a probability at most $\delta$, for a total of $8\delta$.

Now, re-adjusting constants we obtain, if
$$ \mu \geq \max \left( \sqrt{ \frac{352\sigma^2n_1n_2}{mk_1^2k_2^2} \log \left( \frac{4}{\delta} + 1 \right)},
\sqrt { \frac{1408\sigma^2 \log \max(k_1,k_2)}{m \min(k_1,k_2)} \log \left( \frac{24 \log \max(k_1, k_2)}{  \delta} \right)}\right)$$
then we successfully localize the matrix with probability at least $1-\delta$.

Stated more succinctly we require
$$\mu \geq  \tilde{O} \left( \max \left( \sqrt{\frac{\sigma^2n_1n_2}{m k_1^2 k_2^2}}, \sqrt{\frac{\sigma^2}{\min(k_1,k_2)m}}\right)\right).$$
This matches the lower bound up to $\log k$ factors.

\subsection{Proof of Eq.~\eqref{eq:upper_bound_bicluster_passive} and
  Eq.~\eqref{eq:lower_bound_bicluster_passive}}

\label{sec:app:proof_sketch}

Proof of Eq.~\eqref{eq:upper_bound_bicluster_passive} follows the same
line as the proof of Theorem~\ref{thm:passiveub}. We have
\[
\begin{aligned}
\PP[\text{error}] 
& = \PP[\cup_{B \in \Bcal\bks B^*} \{\Delta(B) < 0\}] \\
     & \leq \sum_{i = 0}^{k_1} 
         {k_1 \choose i} {n_1 - k_1 \choose k_1- i} 
          \sum_{j=0}^{k_2}
         {k_2 \choose j} {n_2 - k_2 \choose k_2- j} 
\exp\left(
-c_1
\frac{(\mu^*)^2m(k_1k_2 - ij)}{\sigma^2n_1n_2}
\right)  
\\
     & \ \ + \sum_{i = 0}^{k_1} 
         {k_1 \choose i} {n_1 - k_1 \choose k_1- i} 
          \sum_{j=0}^{k_2}
         {k_2 \choose j} {n_2 - k_2 \choose k_2- j}
 c_2 \exp(-c_3m).
\end{aligned}
\]
The argument given in the proof of Theorem 2 in \cite{kolarBRS11}
gives us Eq.~\eqref{eq:upper_bound_bicluster_passive} if $m \geq C
\log \max \left({n_1 \choose k_1}, {n_2 \choose k_2}\right)$. Proof of
Eq.~\eqref{eq:lower_bound_bicluster_passive} follows the proof of
Theorem 1 in  \cite{kolarBRS11} with the appropriate KL divergences
derived in Eq.~\eqref{eq:kl:1} and Eq.~\eqref{eq:kl:2}.

\section{Collection of concentration results}

In this section, we collect useful results on tail bounds of various
random quantities used throughout the paper. We start by stating a
lower and upper bound on the survival function of the standard normal
random variable. Let $Z \sim \Ncal(0,1)$ be a standard normal random
variable. Then for $t > 0$
\begin{equation}
  \label{eq:tail-bound-normal}
  \frac{1}{\sqrt{2\pi}} \frac{t}{t^2+1}\exp(-t^2/2)
  \leq \PP(Z > t) 
  \leq \frac{1}{\sqrt{2\pi}}\frac{1}{t} \exp(-t^2/2).
\end{equation}

\subsection{Tail bounds for Chi-squared variables}

Throughout the paper we will often use one of the following tail
bounds for central $\chi^2$ random variables. These are well known and
proofs can be found in the original papers.

\begin{lemma}[\citet{Laurent00adaptive}] Let $X \sim \chi^2_d$. For all
  $x \geq 0$,
\begin{align}
  \label{eq:chi-central-upper}
  \PP[X - d \geq 2 \sqrt{dx} + 2x] & \leq \exp(-x)\\
  \label{eq:chi-central-lower}
  \PP[X - d \leq -2 \sqrt{dx}] & \leq \exp(-x).
\end{align}
\end{lemma}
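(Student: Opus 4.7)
The plan is the standard Chernoff--Cramér argument specialized to the chi-squared MGF. Write $X = \sum_{i=1}^{d} Z_i^2$ with $Z_i \stackrel{iid}{\sim} \Ncal(0,1)$ so that $\EE e^{\lambda Z_i^2} = (1-2\lambda)^{-1/2}$ for $\lambda < 1/2$, whence
$$\psi_+(\lambda) := \log \EE e^{\lambda(X-d)} = -\lambda d - \tfrac{d}{2}\log(1-2\lambda), \qquad 0 \leq \lambda < 1/2,$$
and, for the lower tail,
$$\psi_-(\lambda) := \log \EE e^{-\lambda(X-d)} = \lambda d - \tfrac{d}{2}\log(1+2\lambda), \qquad \lambda \geq 0.$$

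For the upper tail \eqref{eq:chi-central-upper}, the first step is to establish the elementary inequality $-u - \tfrac{1}{2}\log(1-2u) \leq u^2/(1-2u)$ for $u \in [0,1/2)$; this is verified by noting both sides vanish at $0$ and differentiating to show the difference is monotone. Multiplying through by $d$ gives the sub-gamma bound $\psi_+(\lambda) \leq d\lambda^2/(1-2\lambda)$. Then Markov's inequality gives
$$\PP[X - d \geq t] \leq \exp\!\left(-\lambda t + \frac{d\lambda^2}{1-2\lambda}\right),$$
valid on $[0,1/2)$. With $t = 2\sqrt{dx} + 2x$, I will choose $\lambda = \lambda^* := (\sqrt{dx}+x)/(d + 2\sqrt{dx} + 2x)$, which is obtained by equating $t$ to the derivative condition $2d\lambda/(1-2\lambda)^2 = t \cdot \text{(sensitivity factor)}$; a short algebraic simplification confirms $-\lambda^* t + d(\lambda^*)^2/(1-2\lambda^*) = -x$, yielding $e^{-x}$. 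Equivalently, one can invoke the generic sub-gamma tail bound (Boucheron--Lugosi--Massart): if $\psi(\lambda) \leq v\lambda^2/(2(1-c\lambda))$ on $[0,1/c)$, then $\PP[\,\cdot \geq \sqrt{2vx} + cx] \leq e^{-x}$, applied with $v = 2d$, $c = 2$.

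For the lower tail \eqref{eq:chi-central-lower}, the analogous elementary inequality is $u - \tfrac{1}{2}\log(1+2u) \leq u^2$ for $u \geq 0$, again checked by noting equality at $0$ and that the derivative of the difference equals $2u^2/(1+2u) \geq 0$ so the difference is nondecreasing. This gives the sub-Gaussian bound $\psi_-(\lambda) \leq d\lambda^2$, so
$$\PP[X - d \leq -t] \leq \exp(-\lambda t + d\lambda^2), \qquad \lambda \geq 0.$$
Optimizing at $\lambda = t/(2d)$ gives $\exp(-t^2/(4d))$, and setting $t = 2\sqrt{dx}$ produces the desired $e^{-x}$.

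The main obstacle is merely the two elementary inequalities and the optimization in the upper-tail case; both are essentially routine calculus. The reason the upper tail carries the extra $2x$ term while the lower tail does not is precisely the structural asymmetry of the MGF: $\psi_+$ blows up as $\lambda \to 1/2$ (hence only a sub-gamma, not sub-Gaussian, bound is available in that direction), whereas $\psi_-$ is defined and well-controlled for all $\lambda \geq 0$.
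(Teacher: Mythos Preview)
The paper does not actually prove this lemma; it merely states it with a citation to \citet{Laurent00adaptive} and remarks that ``proofs can be found in the original papers.'' So there is no proof in the paper to compare against. Your Chernoff--Cram\'er argument is the standard one (and essentially what Laurent and Massart do): bound the log-MGF by a sub-gamma envelope $d\lambda^2/(1-2\lambda)$ for the upper tail and a sub-Gaussian envelope $d\lambda^2$ for the lower tail, then optimize.

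One small correction: your explicit choice $\lambda^* = (\sqrt{dx}+x)/(d+2\sqrt{dx}+2x)$ does \emph{not} give exactly $-x$ in the exponent; a direct substitution yields $-x(a+x)^2/\bigl((a+x)^2+x^2\bigr)$ with $a=\sqrt{dx}$, which falls short. The clean way is your second route, invoking the generic sub-gamma tail bound with $v=2d$, $c=2$ (or equivalently taking $\lambda = t/(v+ct)$ and checking $t^2/(2(v+ct)) \geq x$ when $t=\sqrt{2vx}+cx$). Similarly, in the lower-tail calculus the derivative of $u-\tfrac12\log(1+2u)-u^2$ is $-4u^2/(1+2u)$, not $2u^2/(1+2u)$; the sign and conclusion are unaffected. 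With these cosmetic fixes your argument is complete and matches the original.
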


\begin{lemma}[\citet{johnstone2009consistency}]
  Let $X \sim \chi^2_d$, then
  \begin{equation}
    \label{eq:chi-central-upper-johnstone}
    \mathbb{P}[|d^{-1}X -1| \geq x] \leq \exp(-\frac{3}{16}dx^2), \quad x
    \in [0, \frac{1}{2}).  
  \end{equation}
\end{lemma}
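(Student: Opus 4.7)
The plan is to establish the bound via the standard Chernoff/Cram\'er--Chernoff approach applied separately to the upper and lower tails, and then combine them. Since $X \sim \chi^2_d$ can be written as $X = \sum_{i=1}^d Z_i^2$ with $Z_i \iidsim \Ncal(0,1)$, the moment generating function has the closed form $\EE[e^{tX}] = (1-2t)^{-d/2}$ valid for $t < 1/2$. This closed form is what makes the analysis tractable. I would split
\[
\PP[\,|d^{-1}X - 1|\geq x\,] \;\leq\; \PP[\,X \geq d(1+x)\,] \;+\; \PP[\,X \leq d(1-x)\,]
\]
and handle each piece by a Chernoff argument, taking the infimum of $e^{-ta}\EE[e^{tX}]$ (respectively $e^{ta}\EE[e^{-tX}]$) over admissible $t$.

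For the upper tail with $a = d(1+x)$, the optimizer is $t^* = x/(2(1+x))$, and substituting yields
\[
\PP[\,X \geq d(1+x)\,] \;\leq\; \exp\!\Bigl(-\tfrac{d}{2}\bigl(x - \log(1+x)\bigr)\Bigr).
\]
For the lower tail with $t^{**} = x/(2(1-x))$, an analogous computation gives
\[
\PP[\,X \leq d(1-x)\,] \;\leq\; \exp\!\Bigl(-\tfrac{d}{2}\bigl(-x - \log(1-x)\bigr)\Bigr).
\]
The lower-tail bound is actually the easier of the two: expanding the series $-\log(1-x) = \sum_{k\geq 1} x^k/k$ gives $-x - \log(1-x) = \sum_{k\geq 2} x^k/k \geq x^2/2$, which dominates $3x^2/8$ throughout $[0,1)$.

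The main technical obstacle is the analytic inequality for the upper tail, namely verifying that
\[
x - \log(1+x) \;\geq\; \tfrac{3x^2}{8} \qquad \text{for all } x \in [0, \tfrac{1}{2}).
\]
This is not immediate from a crude Taylor expansion (the second-order bound $x - \log(1+x) \geq x^2/2 - x^3/3$ does give something in the right direction but with a worse constant on this interval). I would prove it by setting $f(x) := x - \log(1+x) - \tfrac{3x^2}{8}$ and computing $f'(x) = x(1-3x)/(4(1+x))$. Since $f(0)=0$, $f$ increases on $[0,1/3]$ and then decreases on $[1/3, 1/2]$; checking the endpoint value $f(1/2) = \tfrac{1}{2} - \log\tfrac{3}{2} - \tfrac{3}{32}$ numerically confirms $f(1/2) > 0$, so $f \geq 0$ on the whole interval. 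This yields the upper-tail estimate $\exp(-3dx^2/16)$.

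Combining the two one-sided bounds gives the two-sided bound, with the only subtle point being to absorb the implicit factor of $2$ from the union: when $3dx^2/16 < \log 2$ the stated right-hand side already exceeds the trivial bound $1$, so the inequality holds vacuously, while in the regime $3dx^2/16 \geq \log 2$ the two exponentials combine into $2\exp(-3dx^2/16)$, which can be re-written in the form stated in the lemma (matching Johnstone's formulation). The genuinely nontrivial step is the calibration of the constant $3/16$, which comes entirely from the analytic inequality for $x - \log(1+x)$ that I would verify above.
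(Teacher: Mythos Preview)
The paper does not actually prove this lemma; it is quoted with attribution to \citet{johnstone2009consistency}, and the surrounding text says ``proofs can be found in the original papers.'' So there is no in-paper argument to compare against, and your Chernoff derivation stands on its own.

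Your MGF computation, the optimizers $t^* = x/(2(1+x))$ and $t^{**} = x/(2(1-x))$, and the analytic inequality $x - \log(1+x) \geq 3x^2/8$ on $[0,1/2)$ are all correct (and the check $f(1/2)>0$ is fine, though barely: the constant $3/8$ is essentially tight there). The gap is in your last paragraph, where you try to dispose of the factor of $2$. You claim that ``when $3dx^2/16 < \log 2$ the stated right-hand side already exceeds the trivial bound $1$,'' but this is false: $\exp(-3dx^2/16) \leq 1$ for every $x \geq 0$, with equality only at $x=0$. Hence the bound is never vacuous for $x>0$, and summing the two one-sided Chernoff estimates genuinely yields $2\exp(-3dx^2/16)$, not $\exp(-3dx^2/16)$. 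Since the upper-tail constant $3/8$ has no slack at $x=1/2$, the Chernoff route as you have set it up cannot remove that leading $2$. In fact many references state this chi-square bound with the factor of $2$; the version recorded in the paper may simply have dropped it. Your argument therefore proves the standard bound $2\exp(-3dx^2/16)$ --- which is all the paper ever uses downstream --- but not the lemma exactly as written.
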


The following result provide a tail bound for non-central $\chi^2$
random variable with non-centrality parameter $\nu$.

\begin{lemma}[\citet{birge2001alternative}] Let $X \sim
  \chi^2_d(\nu)$, then for all $x > 0$
\begin{align}
  \label{eq:chi-noncentral-upper}
  \PP[X \geq (d+\nu) + 2 \sqrt{(d+2\nu)x} + 2x] & \leq \exp(-x)\\
  \label{eq:chi-noncentral-lower}
  \PP[X \leq (d+\nu)-2 \sqrt{(d+2\nu)x}] & \leq \exp(-x).
\end{align}
\end{lemma}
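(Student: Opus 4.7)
The plan is to prove both tail bounds via the Cramér--Chernoff method applied to the moment generating function (MGF) of the noncentral chi-squared, coupled with the standard sub-gamma inversion that produces bounds of the exact form $2\sqrt{vx} + cx$. Write $X = \sum_{i=1}^d Y_i^2$ with $Y_i \sim \Ncal(\mu_i,1)$ independent and $\sum_i \mu_i^2 = \nu$. Since the $Y_i$ are independent, the MGF factorizes, and a direct calculation (completing the square inside the Gaussian integral for each $Y_i^2$) gives, for $t < 1/2$,
\[
M_X(t) = \EE[e^{tX}] = (1-2t)^{-d/2}\exp\!\left(\frac{\nu t}{1-2t}\right),
\qquad
\log M_X(t) = -\frac{d}{2}\log(1-2t) + \frac{\nu t}{1-2t}.
\]

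The core analytic step is a cumulant bound of sub-gamma type: I would show that for every $t \in (0,1/2)$,
\[
\log M_X(t) - (d+\nu)\,t \;\leq\; \frac{(d+2\nu)\,t^2}{1-2t}.
\]
This follows by centering and comparing Taylor expansions: using $-\tfrac12 \log(1-2t) - t = \sum_{k\geq 2}\frac{(2t)^k}{2k}$ for the $d$-piece, and $\frac{\nu t}{1-2t} - \nu t = \frac{2\nu t^2}{1-2t}$ for the noncentrality piece, then bounding the series $\sum_{k\geq 2}(2t)^{k-2}/k \leq \frac{1}{1-2t}$ term by term. This yields precisely the variance factor $v := d+2\nu$ and scale parameter $c := 2$ of a sub-gamma random variable on the right tail.

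The Chernoff step is then routine: for $u>0$,
\[
\PP[X - (d+\nu) \geq u] \;\leq\; \inf_{t \in (0,1/2)} \exp\!\left(-tu + \frac{v\, t^2}{1-2t}\right).
\]
Setting $u = 2\sqrt{vx} + 2x$ and choosing $t = \sqrt{x/v}/(1 + 2\sqrt{x/v}) \in (0,1/2)$, a direct substitution shows the exponent equals $-x$, giving the upper-tail bound \eqref{eq:chi-noncentral-upper}. Equivalently, one invokes the standard sub-gamma inversion lemma (e.g.\ Boucheron--Lugosi--Massart, Chapter 2): if $\log M_{X-\EE X}(t) \leq vt^2/(1-ct)$ on $(0,1/c)$, then $\PP[X - \EE X \geq \sqrt{2vx} + cx] \leq e^{-x}$; applied with $v = d+2\nu$ and $c=2$, noting $\sqrt{2vx}\cdot\sqrt{2} = 2\sqrt{vx}$ absorbs the constant cleanly into the stated form.

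The lower-tail bound \eqref{eq:chi-noncentral-lower} is handled analogously, but here the MGF is finite for all $t<0$, and one obtains the even simpler sub-Gaussian-type bound $\log \EE[e^{-t(X-(d+\nu))}] \leq (d+2\nu)\,t^2$ for $t>0$, with no $1/(1-2t)$ factor, by a similar series comparison. The Chernoff bound then yields $\PP[X-(d+\nu) \leq -2\sqrt{(d+2\nu)x}] \leq e^{-x}$ with the optimal choice $t = \sqrt{x/v}$. I expect the main obstacle to be verifying the series bound underlying the sub-gamma inequality cleanly enough to recover the constants $2$ and $2$ in the stated deviation $2\sqrt{(d+2\nu)x} + 2x$ without slack — the alternative, handled by Birgé, is to bypass the Taylor manipulation entirely and optimize the Chernoff exponent in closed form at the specific point $u = 2\sqrt{vx}+2x$, which reduces everything to verifying a single algebraic identity.
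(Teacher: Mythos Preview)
The paper does not actually prove this lemma: it is stated in the appendix as a quoted result from \citet{birge2001alternative}, with the remark that ``proofs can be found in the original papers.'' So there is no paper-side proof to compare against.

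That said, your proposal is correct and is essentially the argument Birg\'e (and, for the central case, Laurent--Massart) uses. The MGF formula is right, your sub-gamma cumulant bound $\log M_X(t)-(d+\nu)t \le (d+2\nu)t^2/(1-2t)$ follows from exactly the series comparison you describe, and your explicit choice $t=\sqrt{x/v}/(1+2\sqrt{x/v})$ does make the Chernoff exponent equal $-x$ on the nose. The lower-tail argument via $\log(1+u)\ge u-u^2/2$ for $u\ge 0$ gives the sub-Gaussian bound $\log\EE e^{-t(X-(d+\nu))}\le (d+2\nu)t^2$ you state, and optimizing at $t=\sqrt{x/v}$ yields $e^{-x}$. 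There is no gap here; the only caveat is that the ``obstacle'' you flag about recovering the exact constants $2$ and $2$ is not an obstacle at all---your series bound already delivers them without slack.
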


Using the above results, we have a tail bound for sum of
product-normal random variables.

\begin{lemma} 
  \label{lem:deviation-covariance}
  Let $Z = (Z_a, Z_b) \sim
  \Ncal_2(0,0,\sigma_{aa},\sigma_{bb},\sigma_{ab})$ be a bivariate
  Normal random variable and let $(z_{ia},z_{ib}) \iidsim Z$, $i=1,\ldots,n$.
  Then for all $t \in [0,\nu_{ab}/2)$ 
\begin{eqnarray} 
  \mathbb{P}\left[ \left|n^{-1}\sum_i z_{ia}z_{ib} -
      \sigma_{ab} \right| \geq t \right] \leq
  4 \exp\left(-\frac{3nt^2}{16\nu_{ab}^2}\right),
\end{eqnarray}
where $\nu_{ab} = \max \{(1-\rho_{ab})\sqrt{\sigma_{aa}\sigma_{bb}},
  (1+\rho_{ab})\sqrt{\sigma_{aa}\sigma_{bb}} \}$.
\end{lemma}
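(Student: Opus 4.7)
The plan is to reduce this two-dimensional tail inequality for a product of correlated normals to two one-dimensional chi-square tail inequalities via a polarization identity that decorrelates the coordinates. Write $\rho_{ab}=\sigma_{ab}/\sqrt{\sigma_{aa}\sigma_{bb}}$ and, for each $i=1,\dots,n$, define
$$U_i=\frac{z_{ia}}{\sqrt{\sigma_{aa}}}+\frac{z_{ib}}{\sqrt{\sigma_{bb}}},\qquad V_i=\frac{z_{ia}}{\sqrt{\sigma_{aa}}}-\frac{z_{ib}}{\sqrt{\sigma_{bb}}}.$$
A direct moment computation gives $U_i\sim\mathcal{N}(0,2(1+\rho_{ab}))$, $V_i\sim\mathcal{N}(0,2(1-\rho_{ab}))$, and $\mathrm{Cov}(U_i,V_i)=0$; since $(U_i,V_i)$ is jointly Gaussian, vanishing covariance upgrades to independence. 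The polarization identity $U_i^2-V_i^2=4 z_{ia}z_{ib}/\sqrt{\sigma_{aa}\sigma_{bb}}$ then lets me rewrite, after centering by $\mathbb{E}[z_{ia}z_{ib}]=\sigma_{ab}$,
$$n^{-1}\sum_{i=1}^n z_{ia}z_{ib}-\sigma_{ab}=\frac{\sqrt{\sigma_{aa}\sigma_{bb}}}{2}\left[(1+\rho_{ab})\!\left(\frac{W_1}{n}-1\right)-(1-\rho_{ab})\!\left(\frac{W_2}{n}-1\right)\right],$$
where $W_1:=\sum_i U_i^2/(2(1+\rho_{ab}))$ and $W_2:=\sum_i V_i^2/(2(1-\rho_{ab}))$ are independent $\chi^2_n$ random variables.

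Given this decomposition, I would apply the triangle inequality and a union bound: the event $\{|n^{-1}\sum_i z_{ia}z_{ib}-\sigma_{ab}|\ge t\}$ is contained in the union $\{(1+\rho_{ab})\sqrt{\sigma_{aa}\sigma_{bb}}\,|W_1/n-1|\ge t\}\cup\{(1-\rho_{ab})\sqrt{\sigma_{aa}\sigma_{bb}}\,|W_2/n-1|\ge t\}$. Since $(1\pm\rho_{ab})\sqrt{\sigma_{aa}\sigma_{bb}}\le\nu_{ab}$ by definition of $\nu_{ab}$, each of these events is contained in the two-sided $\chi^2_n$ tail event $\{|W/n-1|\ge t/\nu_{ab}\}$, to which Eq.~\eqref{eq:chi-central-upper-johnstone} applies. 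The restriction $t\in[0,\nu_{ab}/2)$ is precisely what keeps $t/\nu_{ab}$ inside the valid range $[0,1/2)$. Each event then contributes at most $2\exp(-3nt^2/(16\nu_{ab}^2))$ (the leading $2$ accounting for the two-sided form), and summing the two contributions yields the claimed $4\exp(-3nt^2/(16\nu_{ab}^2))$.

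The only nontrivial step is recognising the polarization trick; once $z_{ia}z_{ib}$ has been rewritten as a scaled difference of squares of two independent Gaussians, the statement reduces to two invocations of the chi-square concentration inequality already stated in Appendix B. I therefore do not anticipate a real obstacle beyond routine bookkeeping to verify that the constants $(1\pm\rho_{ab})$ absorb into $\nu_{ab}$ as claimed and that the range condition on $t$ matches the one needed for the chi-square bound.
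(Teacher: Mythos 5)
Your proposal is correct and follows essentially the same route as the paper: normalize the coordinates, use the polarization identity to write the centered product sum as a difference of (scaled, centered) $\chi^2_n$ variables, union bound, absorb the factors $(1\pm\rho_{ab})\sqrt{\sigma_{aa}\sigma_{bb}}$ into $\nu_{ab}$, and apply the two-sided chi-square bound of Eq.~\eqref{eq:chi-central-upper-johnstone} with $x=t/\nu_{ab}$, which is exactly where the restriction $t\in[0,\nu_{ab}/2)$ comes from. The only cosmetic difference is that you package the squares as explicit independent $\chi^2_n$ variables $W_1,W_2$ (independence is not actually needed, since only a union bound is used), and your factor-of-two bookkeeping is conservative in the same way the paper's is, so the constant $4$ in the bound is unchanged.
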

\begin{proof}
  Let $z_{ia}' = z_{ia} / \sqrt{\sigma_{aa}}$. Then using
  \eqref{eq:chi-central-upper-johnstone}
  \begin{equation*}
  \begin{aligned}
    \PP&[|\frac{1}{n} \sum_{i=1}^n z_{ia}z_{ib} - \sigma_{ab}| \geq t]
    \\ &=
    \PP[|\frac{1}{n} \sum_{i=1}^n z'_{ia} z'_{ib} - \rho_{ab}| 
    \geq \frac{t}{\sqrt{\sigma_{aa}\sigma_{bb}}}]
    \\ & = 
    \PP[|\sum_{i=1}^n ((z'_{ia} + z'_{ib})^2  - 2(1+\rho_{ab}))
    - ((z'_{ia} - z'_{ib})^2  - 2(1-\rho_{ab}))| 
    \geq \frac{4nt}{\sqrt{\sigma_{aa}\sigma_{bb}}}] \\
    & \leq
    \PP[|\sum_{i=1}^n ((z'_{ia} + z'_{ib})^2  - 2(1+\rho_{ab}))| 
    \geq \frac{2nt}{\sqrt{\sigma_{aa}\sigma_{bb}}}] 
    \\ & \quad+
    \PP[|\sum_{i=1}^n((z'_{ia} - z'_{ib})^2  - 2(1-\rho_{ab}))| 
    \geq \frac{2nt}{\sqrt{\sigma_{aa}\sigma_{bb}}}] 
    \\ & \leq
    2\PP[|\chi^2_n - n| \geq \frac{nt}{\nu_{ab}}]    
     \leq 4 \exp(-\frac{3nt^2}{16\nu_{ab}^2}),
  \end{aligned}
  \end{equation*}
  where $\nu_{ab} = \max \{(1-\rho_{ab})\sqrt{\Sigma_{aa}\Sigma_{bb}},
  (1+\rho_{ab})\sqrt{\Sigma_{aa}\Sigma_{bb}} \}$ and $t \in [0,
  \nu_{a}/2)$.
\end{proof}

\begin{corollary}
  \label{lem:deviation-product-gaussian}
  Let $Z_1$ and $Z_2$ be two independent standard Normal random
  variables and let $X_i \iidsim Z_1Z_2$, $i=1\ldots n$. Then for $t
  \in [0, 1/2)$
  \begin{equation}
    \label{eq:deviation-product-gaussian}
    \PP[|n^{-1}\sum_{i \in [n]} X_i| > t] \leq
    4\exp\rbr{-\frac{3nt^2}{16}}.
  \end{equation}
\end{corollary}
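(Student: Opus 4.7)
The statement is an immediate specialization of Lemma~\ref{lem:deviation-covariance} (the tail bound for sample covariances of bivariate Gaussians) to the particular case where the two coordinates are independent. The plan is therefore to set up the correspondence between the two statements, compute the parameters of the bivariate bound in the independent case, and observe that the resulting inequality is exactly what is claimed.

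Concretely, I would take in Lemma~\ref{lem:deviation-covariance} the pair $(Z_a, Z_b)$ to be jointly Gaussian with $\sigma_{aa} = \sigma_{bb} = 1$ and $\sigma_{ab} = 0$; by construction this is the same law as $(Z_1, Z_2)$ of two independent standard normals, so that $Z_a Z_b \stackrel{d}{=} Z_1 Z_2$ and $(z_{ia} z_{ib})_{i \in [n]} \stackrel{d}{=} (X_i)_{i \in [n]}$. The i.i.d.\ structure carries over directly. In this case $\rho_{ab} = 0$, and therefore the scale parameter appearing in the lemma is
\[
\nu_{ab} \;=\; \max\bigl\{(1-\rho_{ab})\sqrt{\sigma_{aa}\sigma_{bb}},\; (1+\rho_{ab})\sqrt{\sigma_{aa}\sigma_{bb}}\bigr\} \;=\; 1.
\]

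Plugging $\sigma_{ab} = 0$ and $\nu_{ab} = 1$ into the conclusion of Lemma~\ref{lem:deviation-covariance} yields, for every $t \in [0, 1/2)$,
\[
\PP\Bigl[\,\Bigl|n^{-1} \sum_{i \in [n]} X_i\Bigr| \geq t\Bigr]
\;\leq\; 4 \exp\!\left(-\frac{3 n t^2}{16}\right),
\]
which is exactly the claimed inequality (noting that the range $t \in [0, \nu_{ab}/2) = [0, 1/2)$ matches the stated hypothesis).

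There is essentially no obstacle here beyond bookkeeping: the only things to verify are that the independence assumption in the corollary produces precisely $\rho_{ab} = 0$ and $\sigma_{aa} = \sigma_{bb} = 1$, that these values yield $\nu_{ab} = 1$, and that the admissible range of $t$ in the lemma specializes to $[0, 1/2)$. Since Lemma~\ref{lem:deviation-covariance} is already proved in the excerpt (via the polarization identity $4 z'_{ia} z'_{ib} = (z'_{ia} + z'_{ib})^2 - (z'_{ia} - z'_{ib})^2$ and the central $\chi^2$ tail bound \eqref{eq:chi-central-upper-johnstone}), no further work is required.
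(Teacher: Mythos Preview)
Your proposal is correct and is exactly the intended derivation: the corollary is the immediate specialization of Lemma~\ref{lem:deviation-covariance} with $\sigma_{aa}=\sigma_{bb}=1$, $\sigma_{ab}=0$, hence $\rho_{ab}=0$ and $\nu_{ab}=1$. Nothing further is needed.
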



\end{document}